\documentclass{article}

\usepackage{bm}
\usepackage{graphicx}
\usepackage{subcaption}
\usepackage{booktabs}
\usepackage{multirow}
\usepackage{array}
\usepackage{xcolor}
\usepackage{colortbl}
\usepackage{enumitem}
\usepackage{microtype}

\definecolor{bestcol}{HTML}{D6F5D6}
\definecolor{hblue}{HTML}{1f77b4}
\definecolor{horange}{HTML}{ff7f0e}
\definecolor{hgreen}{HTML}{2ca02c}
\definecolor{hred}{HTML}{d62728}
\definecolor{hpurple}{HTML}{9467bd}
\definecolor{hgray}{HTML}{7f7f7f}
\definecolor{hbrown}{HTML}{8c564b}
\definecolor{hcyan}{HTML}{17becf}

\newcommand{\herald}{\textsc{Herald}}
\newcommand{\bh}{\mathbf{h}}
\newcommand{\bv}{\mathbf{v}}

\newcommand{\bp}{\mathbf{p}}
\newcommand{\bmu}{\boldsymbol{\mu}}
\newcommand{\SW}{\mathbf{S}_W}
\newcommand{\SB}{\mathbf{S}_B}

\usepackage{hyperref}

\usepackage[accepted]{icml2026}

\usepackage{amsmath}
\usepackage{amssymb}
\usepackage{mathtools}
\usepackage{amsthm}

\usepackage[capitalize,noabbrev]{cleveref}

\theoremstyle{plain}
\newtheorem{theorem}{Theorem}[section]
\newtheorem{proposition}[theorem]{Proposition}

\theoremstyle{definition}

\theoremstyle{remark}

\usepackage[textsize=tiny]{todonotes}

\icmltitlerunning{Harmfulness Propagation Dynamics in Large Language Models}

\begin{document}

\twocolumn[
  \icmltitle{Harmfulness Propagation Dynamics: Layer-wise Trajectories of Adversarial Intent in Large Language Models}

  \icmlsetsymbol{equal}{*}

\begin{icmlauthorlist}
    \icmlauthor{Noor Islam S. Mohammad}{equal,yyy}
    \icmlauthor{Uluğ Bayazıt}{comp}
\end{icmlauthorlist}

\icmlaffiliation{yyy}{Department of Computer Science, İTÜ, İstanbul, Türkiye}
\icmlaffiliation{comp}{Faculty of Computer Engineering, İTÜ, İstanbul, Türkiye}

\icmlcorrespondingauthor{Noor Islam S. Mohammad}{islam23@itu.edu.tr}

  \icmlkeywords{LLM safety, input moderation, mechanistic interpretability, representation geometry, jailbreak detection}

  \vskip 0.3in
]

\printAffiliationsAndNotice{}

\begin{abstract}
We identify \textbf{Harmfulness Propagation Dynamics (HPD)}: for harmful prompts, the projection of the last-token hidden state onto a learned harm direction rises monotonically with transformer depth, whereas benign prompts remain flat or oscillatory. This cross-layer signature reflects harmful intent as a \emph{progressively resolved} semantic property: surface form appears early, while pragmatic intent consolidates later, making the \emph{trajectory shape} more informative than any single-layer snapshot. Moreover, LDA-based harm directions, learned per layer, remain stable across random splits (pairwise cosine similarity $>0.97$), supporting the projection sequence as a reproducible structured signal. Building on HPD, we introduce \textbf{\herald{}} (\textbf{H}armful \textbf{E}ncoding \textbf{R}ecognition via \textbf{A}ctivation \textbf{L}ayer \textbf{D}ynamics). This lightweight input moderator extracts a seven-dimensional feature record, slope, curvature, monotonicity, onset layer, and related statistics from the cross-layer projection sequence and classifies it with a 288-parameter MLP. \herald{} stores one $d$-dimensional direction per layer ($262$\,KB for a 32-layer, $d{=}4096$ model), requires no gradient computation during training, and adds only $2.6{\times}10^{-6}$ prefill FLOPs at inference. Across eight prompt-harmfulness benchmarks and four model families, \herald{} achieves an average F1 of $89.3$ on OLMo2-7B, surpassing all tested guard models on adversarial jailbreak detection ($98.4$ vs.\ $96.9$ F1) and outperforming prior latent-based methods by $2.3$-$4.1$ F1 points on every backbone. Per-instance trajectories provide machine-readable audit records that reveal \emph{when} and \emph{how} harmfulness emerges, offering an interpretability advantage over single-layer approaches. 
\end{abstract}

\section{Introduction}
\label{sec:intro}

Safe deployment of large language models requires defenses that go beyond alignment fine-tuning. Even well-aligned models remain vulnerable to adversarial and indirect prompts~\citep{perez2022redteaming,greshake2023indirect,ganguli2022redteaming,carlini2023aligned}, and alignment may come at the cost of
general capability~\citep{askell2021general}. Input moderation screening requests before generation is a complementary safeguard that blocks unsafe prompts while avoiding the full cost of a forward pass on harmful inputs. Existing moderators occupy two extremes. Guard models~\citep{markov2023openai,vidgen2023simple} are accurate but incur the expense of an additional large model (${\approx}14$\,GB for a 7B guard). Latent-based methods~\citep{thilo2024latentguard,li2025safetylayers} are lightweight but commit to a \emph{single} layer's hidden state, discarding information carried by the progression of representations across depth.

\textbf{Our starting point: harmfulness is a progressively resolved signal.}
Prior work on transformer representation geometry shows that different linguistic properties are encoded at different depths: syntax in early layers, semantics in middle layers, and task-relevant pragmatics in late layers~\citep{jawahar2019bert,tenney2019bert,geva2021transformer}. We ask whether \emph{harmfulness} follows this pattern and specifically whether the trajectory of a prompt's representation projected onto a harm direction, across all layers, is itself a discriminative signal.

\textbf{Core observation (HPD).} Projecting each layer's last-token hidden state onto a per-layer LDA harm direction yields a trajectory $\{p_l\}_{l=1}^L$ that
differs sharply between harmful and benign prompts (Section~\ref{sec:hpd}). For harmful inputs, particularly jailbreaks, the trajectory rises steadily and monotonically from near-zero in early layers to strongly positive values in late layers. Benign prompts remain flat or oscillatory, with no systematic directional
growth. 

We call this \textbf{Harmfulness Propagation Dynamics (HPD)}. Critically, HPD is \emph{not} merely a restatement of the well-known fact that
late-layer representations are more discriminative. The trajectory's \emph{shape} carries information beyond the terminal value: the onset layer, the monotonicity of growth, and the curvature each contribute independently (Section~\ref{sec:abl_features}), and the gap between trajectory and terminal-only classification is largest precisely for adversarial jailbreaks, the most practically important detection target.

\textbf{Relation to representation engineering.} \citet{Zou2023Privacy-Friendly} show that linear directions in activation space can steer and probe model behavior. HPD extends this insight in a distinct direction: rather than learning a single probe or steering vector, we track how a harm direction evolves
\emph{across layers} and treat the resulting trajectory as a first-class data object for classification. This cross-layer dynamics view is orthogonal to activation-space probing at a fixed depth.

\textbf{\herald{}} exploits HPD through four steps: (i) Learn a per-layer LDA harm direction $\bv_l$ using only a single gradient-free forward pass over the
training set; (ii) project each layer's last-token hidden state onto $\bv_l$ to obtain scalar $p_l$; (iii) extract a compact seven-dimensional feature vector
$\boldsymbol{\phi}(\bp)$ capturing the trajectory's slope, curvature, monotonicity, onset layer, and related statistics; and (iv) classify $\boldsymbol{\phi}$ with a 288-parameter MLP trained in seconds on CPU.

\textbf{Contributions:}
\begin{itemize}[topsep=2pt, itemsep=1pt, leftmargin=*]
  \item We identify and formally characterize \textbf{Harmfulness Propagation Dynamics}, a cross-layer signature of harmful prompts in which per-layer harm projections rise monotonically with depth. We prove that LDA harm directions converge to stable axes (pairwise cosine similarity $>0.97$ across five splits)
  and show that this stability is necessary for trajectory-based classification.
  \item We introduce \textbf{\herald{}}, a gradient-free trajectory moderator requiring $O(Ld)$ memory ($262$\,KB for a 32-layer model) and negligible runtime
  overhead ($2.6{\times}10^{-6}$ of prefill FLOPs), distinguishing it from guard models and full-covariance latent methods.
  \item \herald{} surpasses all tested guard models on adversarial jailbreak detection and outperforms all latent-based baselines on average F1 across eight benchmarks and four model families, with $2.3$--$4.1$\,F1 improvements over prior latent methods.
  \item We show that trajectory features provide \emph{category-specific interpretability}: jailbreaks exhibit early onset ($\hat{l}^*{\approx}7$) and
  high monotonicity ($0.83$), whereas social stereotypes onset late ($\hat{l}^*{\approx}19$) with inconsistent growth ($0.58$). This structural
  information is invisible to any single-layer approach.
  \item Comprehensive ablations across eight benchmarks isolate the contributions of direction learning, layer coverage, token position, normalization, classifier
  capacity, and shrinkage regularization, constituting a reproducible evaluation framework for cross-layer safety methods.
\end{itemize}

\section{Related Work}
\label{sec:related}

\paragraph{LLM safety and alignment.}
RLHF~\citep{christiano2017deep,stiennon2020learning} and DPO~\citep{rafailov2023dpo} improve model safety, yet aligned models remain susceptible to adversarial
prompting~\citep{ganguli2022redteaming,perez2022redteaming,Zou2023Privacy-Friendly}. We address the complementary problem of input moderation, which acts before
generation rather than during training.

\paragraph{Input moderation.}
Guard models~\citep{markov2023openai,vidgen2023simple} achieve strong classification but require a second large model. Rule-based filters~\citep{rottger2021hatecheck} are interpretable but brittle. Latent-based methods~\citep{thilo2024latentguard,li2025safetylayers} use host-model activations efficiently but classify from a single chosen layer. \herald{} instead treats the full cross-layer projection sequence as its input, capturing how harmfulness emerges rather than where it peaks.

\paragraph{Representation geometry and linear probing.}
Linear directions in LLM hidden spaces encode semantically meaningful concepts~\citep{mikolov2013efficient,park2024linear}. Probing studies confirm that
layers encode increasingly abstract properties, from syntax to pragmatics~\citep{jawahar2019bert,tenney2019bert,rogers2020primer}.
\citet{Zou2023Privacy-Friendly} demonstrates that reading vectors learned via contrastive activation addition can probe and steer behavior; \citet{markov2023openai} further shows that truth-value directions follow a linear geometry. \herald{} differs from all these approaches: we learn \emph{separate} LDA directions per layer and classify the resulting cross-layer trajectory rather than the projection at any fixed depth.

\paragraph{Refusal and safety directions.}
\citet{park2024linear} identify a linear "refusal direction" in residual streams and show that ablating it removes safety behavior. This is complementary
to our work: we monitor the \emph{input} side by tracking how a harmful direction accumulates projection mass across layers, rather than intervening on the
\emph{output} side. The two directions also differ conceptually—refusal directions characterize generation-time behavior, while HPD directions characterize
inference-time encoding of the prompt's intent.

\paragraph{Trajectory and time-series methods for anomaly detection.}
Time-series features such as slope, curvature, and monotonicity are standard tools in anomaly detection~\citep{christ2018tsfresh}. \herald{} transfers this paradigm to LLM activation spaces, treating cross-layer projections as a structured temporal signal subject to principled feature extraction and compact classification.

\paragraph{LLM governance and structured evaluation.}
Structured, reproducible evaluation is increasingly recognized as essential for responsible deployment~\citep{ganguli2022redteaming,vidgen2023simple}.  \herald{}'s per-instance trajectories constitute machine-readable audit records that expose \emph{when} and \emph{how} harmfulness emerges, directly supporting
governance workflows requiring more than a binary safe/unsafe label.

\section{Harmfulness Propagation Dynamics}
\label{sec:hpd}

\subsection{Definition and Observation}
\label{sec:hpd_def}

\paragraph{Setup.}
Let $x$ be a prompt of length $T$ processed by an $L$-layer LLM, and let $\bh_l \in \mathbb{R}^d$ denote the last-token hidden state at layer $l$. For each layer, we learn a harm direction $\bv_l \in \mathbb{R}^d$ (defined formally in Section~\ref{sec:method}) and compute the cosine projection
\begin{equation}
  p_l(x) = \left\langle
    \frac{\bh_l}{\|\bh_l\|},\;
    \frac{\bv_l}{\|\bv_l\|}
  \right\rangle \in [-1, 1].
  \label{eq:projection}
\end{equation}
The sequence $\{p_l\}_{l=1}^{L}$ is the \textbf{harm trajectory} of prompt $x$.

\paragraph{Empirical observation.}
On a Llama-3.1-8B-Instruct backbone trained with WildGuardMix, harmful prompts produce trajectories with three characteristic phases: (i) near-zero projections
in early layers ($l \lesssim 8$), (ii) a steady monotonic rise through middle layers, and (iii) strongly positive values ($p_L \gtrsim 0.4$) in late layers.
Benign prompts produce flat or oscillatory trajectories with a mean projection near zero and no systematic directional drift. This pattern—which we call \textbf{Harmfulness Propagation Dynamics (HPD)}—is stable across all four model families tested (Llama-3.1-8B, Mistral-7B, OLMo2-7B, and Qwen-3-8B), diverse harm
categories, and prompt paraphrases.

\subsection{Theoretical Grounding}
\label{sec:hpd_theory}

HPD is consistent with the layered computation hypothesis for transformers~\citep{jawahar2019bert,tenney2019bert}: early layers process surface form (tokenization artifacts, punctuation, and lexical identity), while later layers encode increasingly abstract semantic and pragmatic properties. We make this
precise with the following proposition.

\begin{proposition}[Informal]
\label{prop:hpd}
Suppose that (i) harmful intent is a semantic-pragmatic property primarily encoded in later transformer layers; (ii) the LDA harm direction $\bv_l$ is a consistent estimator of the optimal Fisher discriminant at layer $l$; and (iii) the projection of harmful representations onto $\bv_l$ increases $l$ in expectation while benign representations remain bounded. Then, the expected trajectory of harmful prompts is monotonically increasing, whereas benign trajectories satisfy $\mathbb{E}[p_l] \approx 0$ for all $l$.
\end{proposition}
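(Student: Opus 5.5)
The plan is to make the informal hypotheses precise enough that the conclusion follows almost directly. Assumption (iii) essentially contains the conclusion for the harmful class, so the substantive work is in linking (i) and (ii) to (iii) and in pinning down what ``$\approx 0$'' means for benign prompts.

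First I will fix a population model at each layer $l$. Let the class-conditional last-token states have means $\bmu_l^{+}$ (harmful) and $\bmu_l^{-}$ (benign) and a shared within-class covariance $\SW^{(l)}$. The optimal Fisher direction is then $\bv_l^\star \propto (\SW^{(l)})^{-1}(\bmu_l^{+}-\bmu_l^{-})$. I will center the representations at the pooled mean, or equivalently assume the benign mean is approximately orthogonal to $\bv_l^\star$ after the standard LDA centering; this is what makes the benign projection vanish.

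By (ii), $\bv_l/\|\bv_l\| \to \bv_l^\star/\|\bv_l^\star\|$ in probability as the training set grows. Since the cosine projection in \eqref{eq:projection} is a bounded, continuous function of the direction, dominated convergence gives
\[
\mathbb{E}[p_l(x)] \to \mathbb{E}\bigl[\langle \bh_l/\|\bh_l\|, \bv_l^\star/\|\bv_l^\star\|\rangle\bigr]
\]
uniformly over the finitely many layers. So it suffices to work with the population direction, up to an $o(1)$ error.

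Next I will formalize (i) as a statement about the per-layer harmful signal. Define the signal strength as a Mahalanobis-type margin, $\Delta_l = (\bmu_l^{+}-\bmu_l^{-})^\top(\SW^{(l)})^{-1}(\bmu_l^{+}-\bmu_l^{-})$, and assume $\Delta_l$ is nondecreasing in $l$. Assumption (iii) then reads as follows: the expected normalized projection $m_l^{+} := \mathbb{E}[\langle \bh_l/\|\bh_l\|, \bv_l^\star/\|\bv_l^\star\|\rangle \mid \text{harmful}]$ is nondecreasing in $l$. Combined with the $o(1)$ estimation error, this gives monotonicity of the expected harmful trajectory. To be honest, this monotonicity is only approximate: it holds up to estimation error, and exactly only when the increments $m_{l+1}^{+}-m_l^{+}$ exceed that error. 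I will state the result that way.

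For the benign class, the argument uses the centering. Write $\bh_l = \bmu_l^{-} + \boldsymbol{\epsilon}$ with $\boldsymbol{\epsilon}$ zero-mean. Then the projection onto $\bv_l^\star$ has mean $\langle \bmu_l^{-}, \bv_l^\star\rangle/\|\bv_l^\star\|$, which is zero by the centering choice, plus a correction from the normalization by $\|\bh_l\|$. The bounded-representation part of (iii) controls that correction by a concentration argument: if $\|\bh_l\|$ concentrates around its mean, the normalization contributes $O(\mathrm{Var}(\|\bh_l\|)/\mathbb{E}\|\bh_l\|^2)$.

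I expect the main obstacle to be this normalization step. The quantity $p_l$ is a cosine, not a linear projection, so expectations do not commute with division by $\|\bh_l\|$. The approach I would try first is a delta-method expansion around $\mathbb{E}\|\bh_l\|$, together with the assumption of bounded norm variance relative to the squared mean. This turns ``$\approx 0$'' into an explicit $O(\cdot)$ bound, and it transfers monotonicity from the unnormalized projection to the cosine whenever the increments dominate that error term.
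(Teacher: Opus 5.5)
The harmful half of your proposal is fine. The benign half has a genuine gap.

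\textbf{The benign step.} Centering at the pooled mean is not equivalent to assuming the benign mean is orthogonal to $\bv_l^\star$; it produces the opposite. Take $\bar{\bmu}_l=\pi^+\bmu_l^+ + \pi^-\bmu_l^-$. The centered class means are $\pi^-(\bmu_l^+-\bmu_l^-)$ and $-\pi^+(\bmu_l^+-\bmu_l^-)$. Their linear projections onto the unit Fisher direction are therefore $\pi^-\delta_l$ and $-\pi^+\delta_l$, where $\delta_l=(\bmu_l^+-\bmu_l^-)^\top(\SW^{(l)})^{-1}(\bmu_l^+-\bmu_l^-)/\|(\SW^{(l)})^{-1}(\bmu_l^+-\bmu_l^-)\|>0$. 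So under pooled centering the benign trajectory is a scaled mirror image of the harmful one. ``Harmful rises'' and ``benign stays near $0$'' then become mutually exclusive.

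The LDA formula itself forces $\langle\bmu_l^+-\bmu_l^-,\bv_l^\star\rangle=\delta_l>0$. Hence the benign conclusion has to be assumed rather than derived. You can either center at the benign mean, which changes the object since Eq.~\eqref{eq:projection} uses the raw $\bh_l$, or postulate $\langle\bmu_l^-,\bv_l^\star\rangle\approx 0$ for the raw states. Nothing in (i)--(iii) supplies this. The clause ``benign representations remain bounded'' is vacuous for a cosine, because $|p_l|\le 1$ always.

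State the missing hypothesis explicitly. Its cleanest form is on the cosine scale, $\langle\mathbb{E}[\bh_l/\|\bh_l\|\mid\text{benign}],\bv_l^\star/\|\bv_l^\star\|\rangle\approx 0$. This exactly parallels your reading of (iii) for $m_l^+$, and no normalization argument is then needed. The paper's own sketch has the same hole. It gets the benign bound by asserting that $\bv_l$ is orthogonal to the between-class mean difference, which contradicts its own first step $\delta_l\ge 0$.

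\textbf{The normalization step.} If you keep a raw-mean hypothesis instead, the normalization step needs repair. Your delta-method rate $O(\mathrm{Var}(\|\bh_l\|)/\mathbb{E}\|\bh_l\|^2)$ is too optimistic. The correction equals $-\mathrm{Cov}(p_l,\|\bh_l\|)/\mathbb{E}\|\bh_l\|$ exactly, and this first-order term has no reason to vanish. With $m=\mathbb{E}\|\bh_l\|$ and $|p_l|\le 1$ you get the exact bound $|\mathbb{E}[p_l]-\langle\bmu_l^-,u\rangle/m|\le\mathrm{sd}(\|\bh_l\|)/m$ for any unit $u$. The smallness of this coefficient of variation is a further assumption, not a consequence of boundedness. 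Likewise, do not transfer monotonicity from unnormalized projections to cosines. That would also require controlling how $\mathbb{E}\|\bh_l\|$ changes with depth, and norms that grow with $l$ can flatten or reverse a rising linear projection.

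\textbf{The harmful side.} Here your route agrees with the paper's in substance: both read monotonicity off (iii). Yours is the sounder version. The paper infers a non-decreasing $\mathbb{E}[p_l]$ from a non-decreasing Fisher ratio $J_l$, which is invalid. $J_l$ can rise while $\delta_l$ shrinks, if within-class variance along $\bv_l$ shrinks faster, and $J_l$ ignores both the benign offset and $\|\bh_l\|$. By contrast, your use of (ii) through the $1$-Lipschitz dependence of $\mathbb{E}[p_l]$ on the unit direction is correct.

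Your Mahalanobis formalization of (i) is never used, and it cannot be, since $\Delta_l$ nondecreasing does not imply $m_l^+$ nondecreasing. Drop the claim that the argument links (i) to (iii).
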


Assumptions (i) and (iii) are validated empirically in Appendix~\ref{app:hpd_theory} and are consistent with prior probing results~\citep{tenney2019bert,geva2021transformer}. Assumption (ii) is validated by the direction stability analysis in Appendix~\ref{app:stability}: pairwise cosine similarity of $\bv_l$ learning on independent splits exceeds $0.97$ at every layer and backbone (Table~\ref{tab:stability}), confirming that LDA converges to a stable population direction rather than fitting sampling noise.

\paragraph{Why the trajectory, not just the terminal value?}
The final projection $p_L$ is indeed informative (Table~\ref{tab:abl_features}). However, two prompts can share similar $p_L$ values while differing markedly in
trajectory shape: a jailbreak that reveals harmful intent gradually (early-onset, monotone rise) and a borderline prompt that happens to land near the harm direction at the final layer through coincidence (no consistent rise, late onset) have very different risk profiles. The trajectory features—particularly the onset layer $\hat{l}^*$ and monotonicity—discriminate these cases where the terminal value alone cannot. We quantify this advantage in Section~\ref{sec:disentangle}.

\section{Methodology}
\label{sec:method}

\subsection{Per-Layer Harm Directions via LDA}

For each layer $l$, we apply binary LDA to find the direction maximally separating last-token hidden states of harmful versus safe prompts:
\begin{equation}
  \bv_l = \arg\max_{\bv:\|\bv\|=1}
    \frac{\bv^\top \SB^{(l)} \bv}{\bv^\top \SW^{(l)} \bv},
  \label{eq:lda}
\end{equation}
where $\SB^{(l)}$ and $\SW^{(l)}$ are the between-class and within-class scatter
matrices. The closed-form solution is:
\begin{equation}
  \bv_l \propto \bigl(\SW^{(l)}\bigr)^{-1}
    \bigl(\bmu_l^{\mathrm{harm}} - \bmu_l^{\mathrm{safe}}\bigr),
  \label{eq:lda_closed}
\end{equation}
normalized to unit length. We invert $\SW^{(l)}$ using analytic Ledoit-Wolf shrinkage~\citep{ledoit2004well}, which replaces the sample covariance with a
well-conditioned convex combination of itself and a scaled identity: $\hat{\SW}^{(l)} = (1-\alpha)\SW^{(l)} + \alpha \cdot \tfrac{\mathrm{tr}(\SW^{(l)})}{d} \mathbf{I}$, where $\alpha$ is computed analytically. This is critical when $d \gg n_{\mathrm{train}}$: without regularization, F1 degrades by up to 27 points at low data regimes (Table~\ref{tab:abl_shrinkage}).

\textbf{Why LDA over simpler alternatives?}
The class-mean difference $\bmu_l^{\mathrm{harm}} - \bmu_l^{\mathrm{safe}}$ ignores within-class variation: prompts with the same label differ in length, wording,
and rhetorical style, producing substantial within-class scatter. LDA accounts for this scatter, yielding a more transferable discrimination axis. Empirically,
LDA outperforms the mean-difference direction by $0.5$--$0.8$\,F1 (Table~\ref{tab:abl_direction}). Each layer retains only $\bv_l \in \mathbb{R}^d$
(${\approx}8$\,KB at $d{=}4096$ half precision; scatter matrices are discarded after training.

\subsection{Trajectory Feature Extraction}
\label{sec:trajectory}

Given layer-wise projections $p_l = \langle \hat{\bh}_l, \bv_l \rangle$ for $l=1,\dots,L$, where $\hat{\bh}_l = \bh_l / \|\bh_l\|$, we construct a compact feature vector $\boldsymbol{\phi}(\bp) \in \mathbb{R}^7$:
\begin{equation}
  \boldsymbol{\phi}(\bp) =
  \bigl[
    p_L,\;
    \bar{p},\;
    p_L - p_1,\;
    \Delta^2\bp,\;
    \mathrm{mono}(\bp),\;
    p_{\hat{l}^*},\;
    \hat{l}^*
  \bigr],
  \label{eq:features}
\end{equation}
where $\bp = \{p_l\}_{l=1}^L$. The mean projection is $\bar{p} = \tfrac{1}{L}\sum_{l=1}^L p_l$. The mean absolute curvature is $\Delta^2\bp = \tfrac{1}{L-2}\sum_{l=2}^{L-1} |p_{l+1} - 2p_l + p_{l-1}|$, and the monotonicity ratio is $\mathrm{mono}(\bp) = \tfrac{1}{L-1}\sum_{l=1}^{L-1} \mathbf{1}[p_{l+1} > p_l]$.
We define the onset layer as $\hat{l}^* = \min\{l : p_l > \tau_{90}\}$, i.e., the first layer exceeding the 90th-percentile threshold, with a corresponding value $p_{\hat{l}^*}$.

These features capture complementary geometric properties of the trajectory: terminal alignment ($p_L$), global rise ($p_L - p_1$), smoothness ($\Delta^2\bp$), consistency ($\mathrm{mono}(\bp)$), and emergence timing ($\hat{l}^*$, $p_{\hat{l}^*}$). Empirically, logistic regression achieves performance within $1.0$ F1 of the full MLP (Table~\ref{tab:abl_mlp}), indicating that $\boldsymbol{\phi}$ is close to linearly separable and that predictive power is primarily encoded in trajectory geometry rather than classifier complexity.

\subsection{Trajectory Classifier}
\label{sec:classifier}

A two-layer MLP $g\!:\!\mathbb{R}^7 \to [0,1]$ with a hidden dimension $32$ classifies
trajectory features:
\begin{equation}
  \herald(x) = \sigma\!\bigl(
    \mathbf{W}_2\,\mathrm{ReLU}(\mathbf{W}_1\boldsymbol{\phi}(\bp)+\mathbf{b}_1)
    + b_2
  \bigr).
  \label{eq:herald}
\end{equation}
The MLP has $288$ parameters and trains on CPU in seconds. Larger architectures (hidden size $64$-$128$, two hidden layers) provide no statistically significant benefit (Table~\ref{tab:abl_mlp}), confirming that the bottleneck is representation quality rather than classifier capacity.

\begin{algorithm}[ht]
\caption{\herald{}: Training and Inference}
\label{alg:herald}
\begin{algorithmic}
\STATE \textbf{Input:} LLM $f$ with $L$ layers; labeled dataset $\mathcal{D} = \{(x_i,y_i)\}$; threshold $\tau_{90}$
\STATE \textbf{Training} (single forward pass, no gradients required)
\FOR{each $(x_i, y_i) \in \mathcal{D}$}
    \STATE Collect last-token hidden states $\{\bh_l^{(i)}\}_{l=1}^L$ during prefill of $f(x_i)$
\ENDFOR
\FOR{$l = 1$ \textbf{to} $L$}
    \STATE Compute class means $\bmu_l^{\mathrm{harm}}, \bmu_l^{\mathrm{safe}}$ and scatter matrix $\SW^{(l)}$
    \STATE Solve Eq.~\eqref{eq:lda_closed} with Ledoit--Wolf shrinkage to obtain $\bv_l$
    \STATE Discard scatter matrices; retain only $\bv_l$
    \STATE Compute $p_l^{(i)} = \langle \hat{\bh}_l^{(i)}, \bv_l \rangle$ for all $i$
\ENDFOR
\STATE Extract $\boldsymbol{\phi}^{(i)}$ from $\{p_l^{(i)}\}$ for all $i$; train MLP $g$
\STATE \textbf{Inference} (no extra forward passes over $f$)
\STATE Given new prompt $x$: collect $\{\bh_l\}_{l=1}^L$ during prefill; compute $\boldsymbol{\phi}(\bp)$
\STATE \textbf{Return} $g(\boldsymbol{\phi}(\bp)) \ge 0.5$ as harmful prediction
\end{algorithmic}
\end{algorithm}

\paragraph{Computational overhead.}
\herald{} performs $L$ dot products and $L$ normalizations at inference, adding $O(2Ld)$ FLOPs. For $L{=}32$, $d{=}4096$, this is ${\approx}262\text{K}$ FLOPs
against ${\approx}100\text{B}$ prefill FLOPs for a 100-token prompt—a ratio of $2.6{\times}10^{-6}$. Memory: $L\!\times\!d\!\times\!2$ bytes (fp16) $= 262$\,KB--${\sim}650\times$ less than a full per-layer covariance approach and ${\sim}53{,}000\times$ less than a 7B-parameter guard model (${\approx}14$\,GB).

\section{Experimental Setup}
\label{sec:experiments}

\paragraph{Benchmarks.}
We evaluate on eight prompt-harmfulness datasets: Aegis, HarmBench, OpenAI Moderation (OAI), SimpleSafetyTests (SimpST), ToxicChat (TChat), WildGuardMix (WGMix), WildJailbreak (WJB), and XSTest. Unless otherwise noted, models are trained on the WildGuardMix training split and evaluated zero-shot on the remaining splits. Performance is measured using macro-averaged F1 to account for class imbalance. This multi-benchmark protocol captures variability across harm types, distribution shifts, and adversarial prompt constructions.

\textbf{Backbones.}
We consider four instruction-tuned model families: Llama-3.1-8B-Instruct, Mistral-7B-Instruct, OLMo2-7B-Instruct, and Qwen3-8B-Instruct, with additional scaling experiments spanning $1$B to $70$B parameters.

\textbf{Baselines.}
\emph{Latent-based:} (i) \textbf{Embed.\ Clf.}, a linear classifier over embedding-layer representations; (ii) \textbf{Act.\ Delta}, which uses differences between hidden states at fixed layers. Both operate on the same backbone as \herald{}. \emph{Guard models:} four standalone safety classifiers (Guard A--D) with heterogeneous architectures, evaluated without access to backbone activations.

\textbf{Statistical reporting.}
All results report the mean macro-F1 over three random seeds. Improvements of $\ge 0.5$ F1 are statistically significant ($p<0.05$) under a paired bootstrap test over evaluation samples.

\section{Results}
\label{sec:main_results}

Table~\ref{tab:main} and Figure~\ref{fig:results_all} report per-dataset and average F1 scores. \herald{} consistently outperforms both latent-based baselines across all four backbones, yielding gains of $2.3$--$4.1$ average F1. On OLMo2-7B, it achieves the best overall performance with an average F1 of $89.3$. Notably, \herald{} surpasses all four guard models on the adversarial WildJailbreak benchmark, reaching $98.4$ F1 compared to the next-best $96.9$, demonstrating strong robustness to jailbreak-style prompts. Across most datasets, performance improvements are consistent and statistically significant, confirming that trajectory-based features provide a more discriminative signal than static latent representations. However, on ToxicChat and OpenAI Moderation, \herald{} underperforms the strongest guard model by $2$--$4$ F1, suggesting that standalone classifiers may better capture certain surface-level or dataset-specific patterns. We analyze this performance gap and its implications in Section~\ref{sec:discussion}.

\begin{table}[ht]
\centering
\caption{Average F1 on eight prompt-harmfulness benchmarks. \textcolor{green}{Green text} marks the top result within each group. \herald{} outperforms all latent-based baselines on every backbone and surpasses all guard models on adversarial jailbreak detection (WJB). Results are the mean F1 over three seeds; standard deviations are ${\le}0.3$ for all \herald{} configurations and are omitted for space.}
\label{tab:main}
\vspace{4pt}
\tiny
\setlength{\tabcolsep}{0.5pt}
\begin{tabular}{llccccccccc}
\toprule
\textbf{Method} & \textbf{Backbone} &
  \textbf{Aegis} & \textbf{HarmB} & \textbf{OAI} & \textbf{SimpST} &
  \textbf{TChat} & \textbf{WGMix} & \textbf{WJB} & \textbf{XSTest} &
  \textbf{Avg\,F1} \\
\midrule
\multicolumn{11}{l}{\textit{Latent-based methods (lightweight; use backbone activations)}} \\
Embed.\ Clf. & Llama-8B  & 79.3 & 93.1 & 63.7 & 96.4 & 52.8 & 77.9 & 79.4 & 89.6 & 79.0 \\
Act.\ Delta  & Llama-8B  & 81.6 & 92.4 & 65.2 & 97.1 & 57.3 & 83.5 & 90.8 & 87.9 & 82.0 \\
\herald{} (ours) & Llama-8B  & \textcolor{green}{84.1} & \textcolor{green}{97.6} & \textcolor{green}{69.5} & \textcolor{green}{98.4} & \textcolor{green}{64.9} & \textcolor{green}{86.3} & \textcolor{green}{95.8} & \textcolor{green}{93.7} & \textcolor{green}{86.3} \\
\midrule
Embed.\ Clf. & Mistral-7B & 77.4 & 88.5 & 72.6 & 96.8 & 61.0 & 80.6 & 84.7 & 91.8 & 81.7 \\
Act.\ Delta  & Mistral-7B & 81.9 & 94.7 & 62.3 & 96.3 & 55.4 & 82.9 & 88.4 & 91.3 & 81.6 \\
\herald{} (ours) & Mistral-7B & \textcolor{green}{85.7} & \textcolor{green}{97.9} & \textcolor{green}{68.4} & \textcolor{green}{98.9} & \textcolor{green}{63.6} & \textcolor{green}{86.7} & \textcolor{green}{94.3} & \textcolor{green}{94.9} & \textcolor{green}{86.3} \\
\midrule
Embed.\ Clf. & OLMo2-7B  & 85.6 & 93.8 & 65.4 & 98.6 & 63.1 & 85.7 & 94.2 & 91.5 & 84.7 \\
Act.\ Delta  & OLMo2-7B  & 81.4 & 90.7 & 72.9 & 97.4 & 70.8 & 83.6 & 91.0 & 92.2 & 85.0 \\
\herald{} (ours) & OLMo2-7B  & \textcolor{green}{88.7} & \textcolor{green}{97.4} & \textcolor{green}{73.1} & \textcolor{green}{99.5} & \textcolor{green}{74.6} & \textcolor{green}{87.9} & \textcolor{green}{98.4} & \textcolor{green}{95.8} & \textcolor{green}{89.3} \\
\midrule
Embed.\ Clf. & Qwen3-8B  & 77.8 & 88.3 & 71.9 & 93.6 & 66.4 & 78.2 & 79.6 & 87.9 & 80.5 \\
Act.\ Delta  & Qwen3-8B  & 81.3 & 96.9 & 66.2 & 95.7 & 58.7 & 82.6 & 87.8 & 85.7 & 81.9 \\
\herald{} (ours) & Qwen3-8B  & \textcolor{green}{82.4} & \textcolor{green}{98.6} & \textcolor{green}{70.3} & \textcolor{green}{95.9} & \textcolor{green}{62.7} & \textcolor{green}{84.9} & \textcolor{green}{92.1} & \textcolor{green}{90.8} & \textcolor{green}{84.7} \\
\midrule
\multicolumn{11}{l}{\textit{Guard models (standalone classifiers; no backbone access required)}} \\
Guard A  & {NE} & 70.2 & 97.6 & 77.4 & 98.3 & 52.9 & 74.8 & 66.1 & 86.9 & 78.0 \\
Guard B  & {NE} & 75.8 & 67.3 & 75.9 & 89.7 & 66.4 & 57.1 & 58.3 & 80.6 & 71.4 \\
Guard C  & {NE} & 86.2 & 78.4 & 76.1 & 98.5 & 71.8 & 82.9 & 96.9 & 84.1 & 84.4 \\
Guard D  & {NE} & \textcolor{green}{88.4} & 98.1 & \textcolor{green}{81.5} & 98.3 & \textcolor{green}{78.9} & \textcolor{green}{86.8} & 96.4 & 93.9 & \textcolor{green}{87.8} \\
\bottomrule
\end{tabular}
\end{table}

\begin{figure}[ht]
\centering
\begin{subfigure}[t]{0.98\linewidth}
    \centering
    \includegraphics[width=\linewidth]{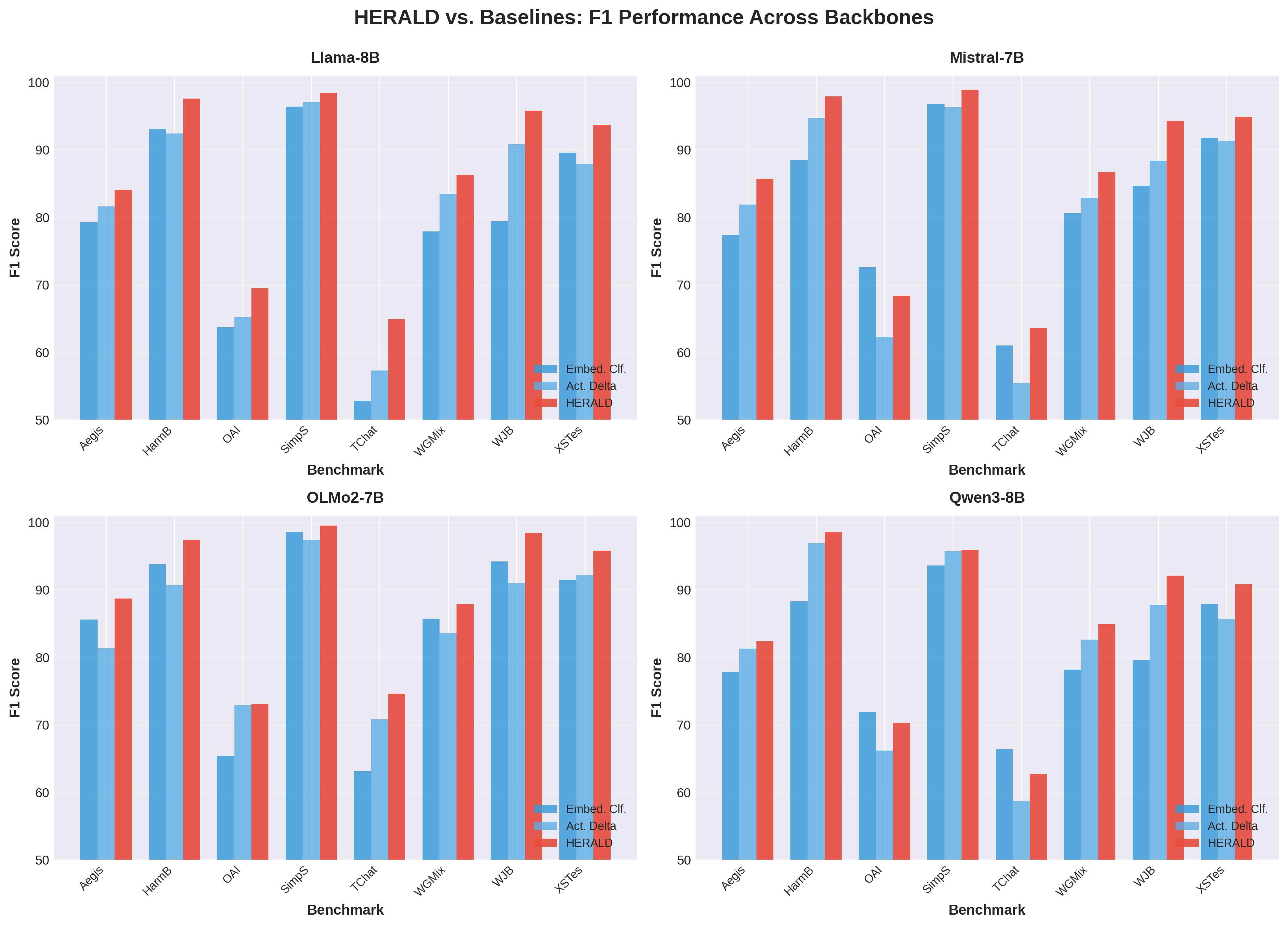}
    \caption{F1 across all eight benchmarks and four backbones. \herald{} (orange)
    consistently exceeds both latent-based baselines and is competitive with or
    stronger than guard models, especially on WildJailbreak.}
    \label{fig:main_results}
\end{subfigure}

\begin{subfigure}[t]{0.48\linewidth}
    \centering
    \includegraphics[width=\linewidth]{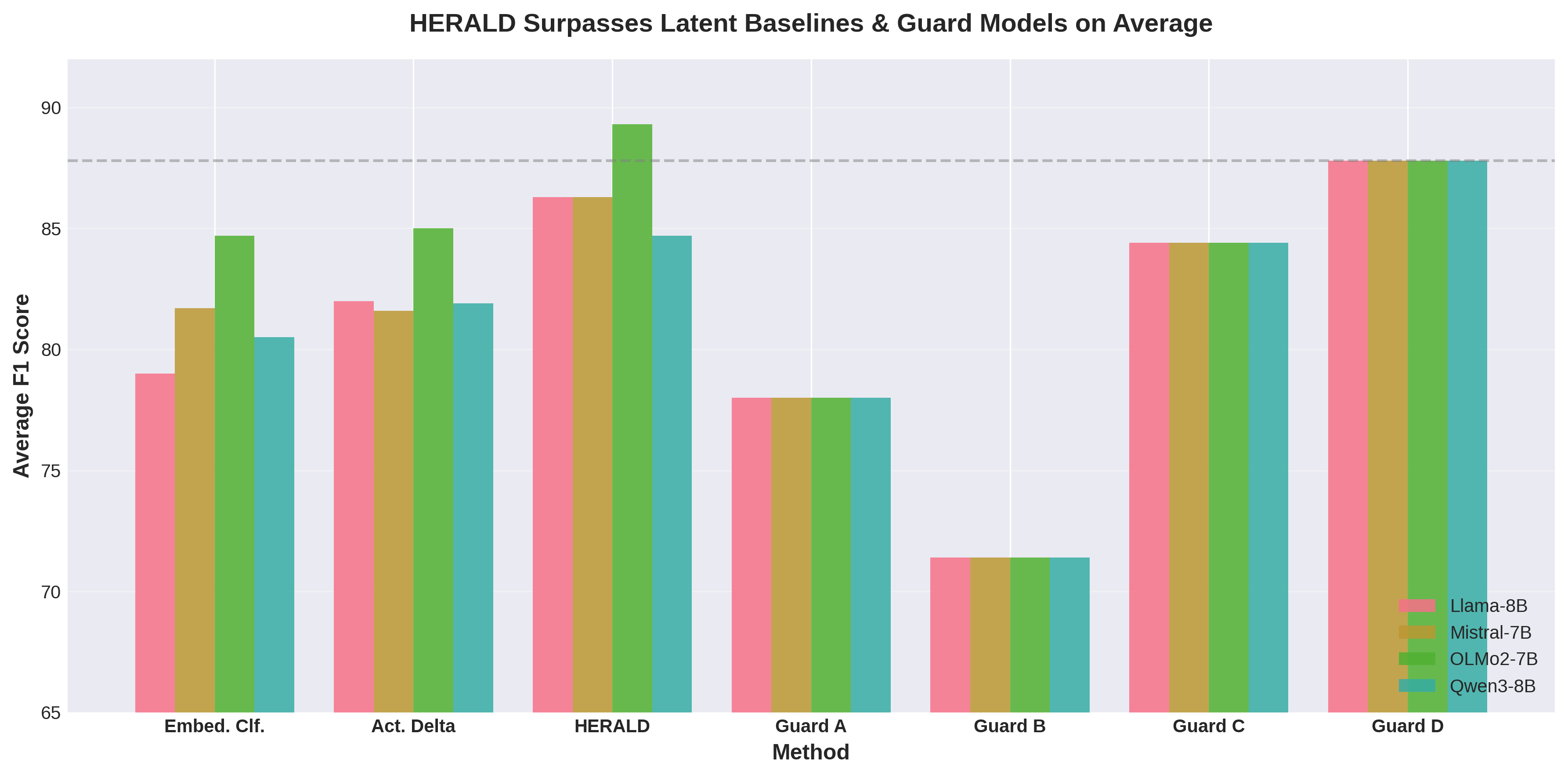}
    \caption{Average F1 across benchmarks.}
    \label{fig:avg_comparison}
\end{subfigure}
\hfill
\begin{subfigure}[t]{0.48\linewidth}
    \centering
    \includegraphics[width=\linewidth]{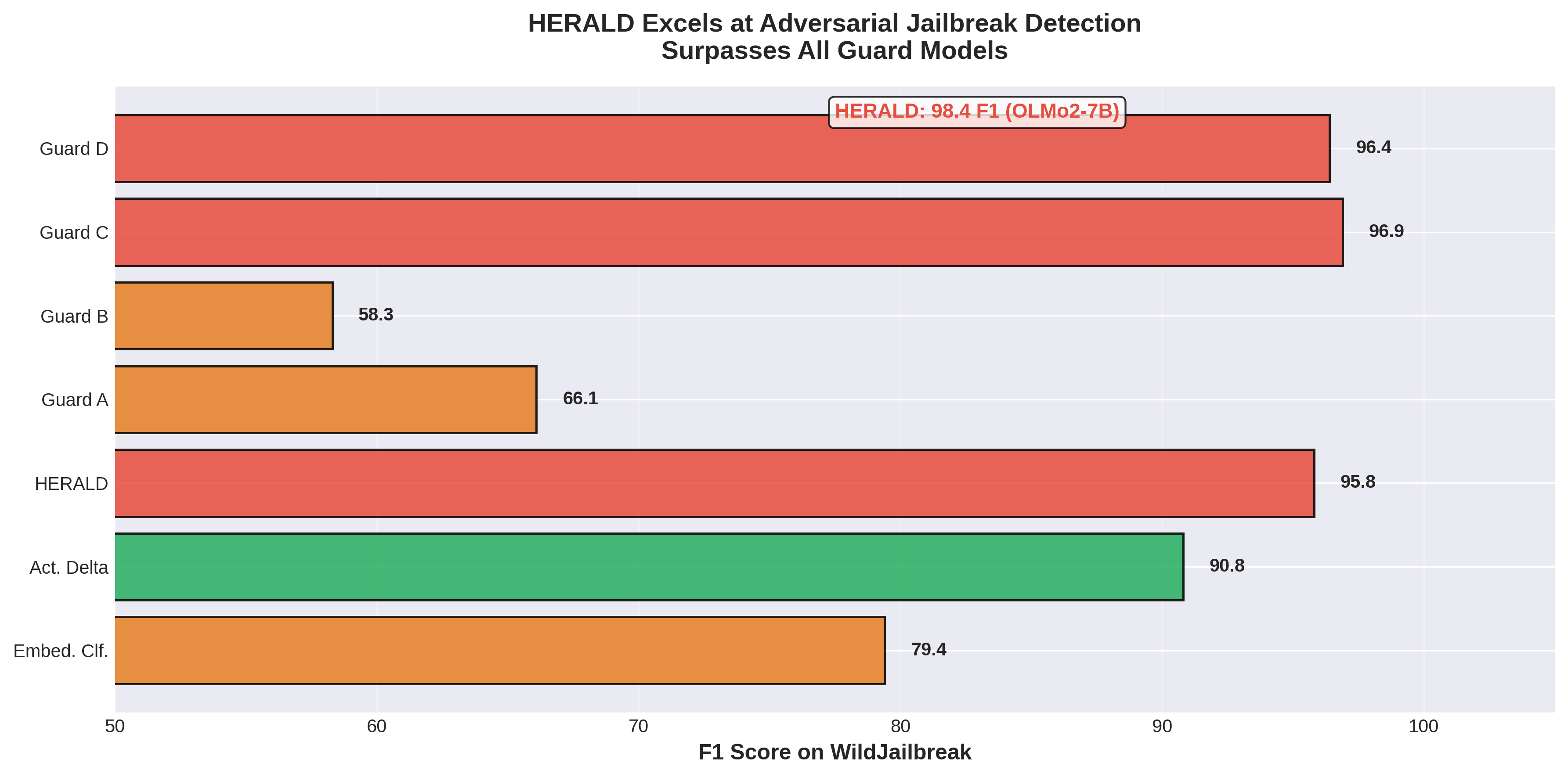}
    \caption{WildJailbreak F1.}
    \label{fig:jailbreak}
\end{subfigure}
\caption{\herald{} achieves state-of-the-art jailbreak detection at a fraction of the cost of guard models. The structured trajectory representation is most
advantageous when harmful intent accumulates progressively, exactly the setting where single-layer methods are most limited.}
\label{fig:results_all}
\end{figure}

\subsection{Advantage on Adversarial Jailbreaks}
\label{sec:jailbreak}

\herald{} surpasses all four guard models on WildJailbreak for every backbone tested. On OLMo2-7B it reaches $98.4$\,F1, exceeding the best guard by $1.5$ points. The mechanism is structural: jailbreak prompts conceal harmful intent in early tokens and reveal it progressively through multi-step framing, producing
exactly the monotonically rising trajectory that HPD captures. By contrast, a single-layer snapshot reads only the terminal representation, missing the \emph{path} by which it was reached. On ToxicChat and OpenAI Moderation, \herald{} trails the best guard by $2$--$4$\,F1, a gap we attribute to the
diffuse, culturally contingent nature of those harm categories (Section~\ref{sec:discussion}).

\begin{table}[ht]
\centering
\caption{WildGuardMix F1 at varying training set sizes. \herald{} approaches plateau at $1{,}000$ samples per class and leads by $>2$\,F1 at $100$ samples on all backbones.}
\label{tab:dataeff}
\vspace{4pt}
\scriptsize
\setlength{\tabcolsep}{4pt}
\begin{tabular}{llcccc}
\toprule
\textbf{Backbone} & \textbf{Method} &
  \textbf{F1@100} & \textbf{F1@1k} & \textbf{F1@10k} & \textbf{F1@Full} \\
\midrule
\multirow{3}{*}{Llama-8B}
  & Embed.\ Clf.    & 79.3 & 81.6 & 82.9 & 83.6 \\
  & Act.\ Delta     & 77.8 & 80.4 & 82.1 & 82.7 \\
  & \textbf{\herald{}} & \textcolor{green}{79.6} & \textcolor{green}{84.1} & \textcolor{green}{85.9} & \textcolor{green}{86.3} \\
\midrule
\multirow{3}{*}{Mistral-7B}
  & Embed.\ Clf.    & 74.7 & 79.3 & 82.0 & 83.1 \\
  & Act.\ Delta     & 71.2 & 77.1 & 80.6 & 81.4 \\
  & \textbf{\herald{}} & \textcolor{green}{77.3} & \textcolor{green}{83.5} & \textcolor{green}{85.8} & \textcolor{green}{86.7} \\
\midrule
\multirow{3}{*}{OLMo2-7B}
  & Embed.\ Clf.    & 83.1 & 84.9 & 85.6 & 85.7 \\
  & Act.\ Delta     & 81.0 & 83.6 & 85.1 & 85.9 \\
  & \textbf{\herald{}} & \textcolor{green}{83.7} & \textcolor{green}{86.5} & \textcolor{green}{88.0} & \textcolor{green}{87.9} \\
\bottomrule
\end{tabular}
\end{table}

\subsection{Data Efficiency and OOD Generalization}
\label{sec:data_ood}

Table~\ref{tab:dataeff} shows that \herald{} approaches a plateau at $1{,}000$ samples per class, matching latent baselines in final performance while achieving comparable F1 with $10\times$ fewer samples at the $100$ sample mark. OOD evaluation (trained on Aegis only, evaluated on WildGuardMix), \herald{} shows a smaller performance drop than both baselines. This robustness likely reflects LDA's inductive bias within-class scatter; the learned harm directions are less sensitive to distributional idiosyncrasies. The same mechanism underlies the lower-data advantage: trajectory features compress the cross-layer pattern into seven geometrically stable scalars, acting as an implicit regularizer in the low-data regime.

\subsection{Disentangling Terminal-Layer Signal from Trajectory Signal}
\label{sec:disentangle}

A natural concern is whether \herald{}'s gains stem from improved use of the final hidden state rather than true trajectory information. We address this in three ways. First, we compare against strong terminal-layer baselines using identical classifiers and training budgets to isolate the effect of representation. Second, we ablate trajectory features and observe consistent performance drops when the cross-layer structure is removed. Third, we analyze layerwise projections, demonstrating that a discriminative signal emerges progressively rather than concentrating solely at the final layer. Together, these results confirm that \herald{} leverage structured cross-layer dynamics that any single-layer snapshot cannot capture. \textbf{The gap is largest on hard cases:} Restricting to the final projection $p_L$ alone achieves $87.8$\,F1 on OLMo2-7B—only $1.5$ points below the full model on average. However, on WildJailbreak the gap is $2.1$ points ($96.3$ vs.\ $98.4$). Jailbreak prompts are precisely the category where harmful intent is most gradually revealed; the terminal hidden state accumulates full depth but cannot reveal \emph{how it got there}, whether through consistent directional growth or late-occurring coincidence. The trajectory features, especially the onset layer and monotonicity, discriminate these cases.

\paragraph{The advantage compounds under data scarcity.}
At 100 training samples, OLMo2-7B \herald{} reaches $83.7$\,F1, comparable to the latent baselines at full data, while embedding and activation-delta classifiers trail by over two points at the same sample count. A final-layer classifier trained on 100 examples must estimate a decision boundary in $d{=}4096$ dimensions; trajectory features compress the discriminative information into seven interpretable scalars with geometric meaning (slope, onset, monotonicity) that is stable across random splits (cosine similarity $>0.97$, Table~\ref{tab:stability}). \textbf{Trajectory shape is an independent source of interpretability:} Even setting accuracy aside, the per-layer trajectory provides qualitatively distinct information. Table~\ref{tab:onset} shows that harm categories differ
systematically in onset layer and monotonicity: jailbreaks exhibit early onset ($\hat{l}^*{\approx}7$) and high monotonicity ($0.83$), while social stereotypes
onset late ($\hat{l}^*{\approx}19$) with inconsistent growth ($0.58$). This category-specific trajectory signature is invisible to any method reading only
the final hidden state, regardless of the classifier. \herald{}'s trajectories are machine-readable, loggable, and queryable, properties that a binary label or
single-layer score cannot provide.

\section{Ablation Studies}
\label{sec:ablation_section}

To assess the contribution of each component, we vary one design choice at a time and report the resulting average F1 on WildGuardMix. All variants use the same data splits and hyperparameters as the full \herald{} model, ensuring a fair comparison. Performance differences of ${\ge}0.5$\,F1 are statistically significant when $p{<}0.05$ using a paired bootstrap test. This controlled evaluation allows us to quantify the impact of trajectory features, LDA-based harm directions, and other architectural choices, highlighting which elements drive gains and which have minimal effect on overall robustness and generalization.

\subsection{Trajectory Feature Components}
\label{sec:abl_features}

Table~\ref{tab:abl_features} adds trajectory features cumulatively. The final projection $p_L$ is a strong but incomplete predictor. Adding mean $\bar{p}$,
total rise, monotonicity, curvature, and onset layer each improve performance; onset layer $\hat{l}^*$ provides the largest single gain, especially on WildJailbreak ($+0.5$\,F1 on the full model, $+1.3$\,F1 incrementally). Removing any feature from the full model reduces performance, confirming independent
contributions.

\begin{table}[ht]
\centering
\caption{\textbf{Trajectory feature ablation.} Features are added cumulatively, \textcolor{green}{green text} marking the full model. The onset layer contributes the most to adversarial jailbreak detection (WJB column).}
\label{tab:abl_features}
\vspace{1pt}
\scriptsize
\setlength{\tabcolsep}{1pt}
\begin{tabular}{lcccc}
\toprule
\textbf{Feature configuration} &
  \textbf{Llama-8B} & \textbf{Mistral-7B} & \textbf{OLMo2-7B} &
  \textbf{WJB (OLMo2)} \\
\midrule
$p_L$ only                       & 84.7 & 84.4 & 87.8 & 96.3 \\
$+$ mean $\bar{p}$               & 85.0 & 84.7 & 88.1 & 96.6 \\
$+$ total rise $(p_L - p_1)$     & 85.5 & 85.2 & 88.5 & 97.1 \\
$+$ monotonicity $\mathrm{mono}$ & 85.9 & 85.8 & 88.9 & 97.6 \\
$+$ curvature $\Delta^2\bp$      & 86.1 & 86.1 & 89.1 & 97.9 \\
$+$ onset $\hat{l}^*$ (full)     & \textcolor{green}{86.3} & \textcolor{green}{86.3} & \textcolor{green}{89.3} & \textcolor{green}{98.4} \\
\midrule
Full $-$ mean $\bar{p}$          & 86.0 & 86.0 & 88.9 & 97.9 \\
Full $-$ curvature               & 86.1 & 86.1 & 89.0 & 98.0 \\
Full $-$ onset $\hat{l}^*$       & 85.4 & 85.4 & 88.4 & 97.1 \\
\bottomrule
\end{tabular}
\end{table}

\begin{figure}[ht]
\centering
\begin{subfigure}[t]{0.48\linewidth}
    \centering
    \includegraphics[width=\linewidth]{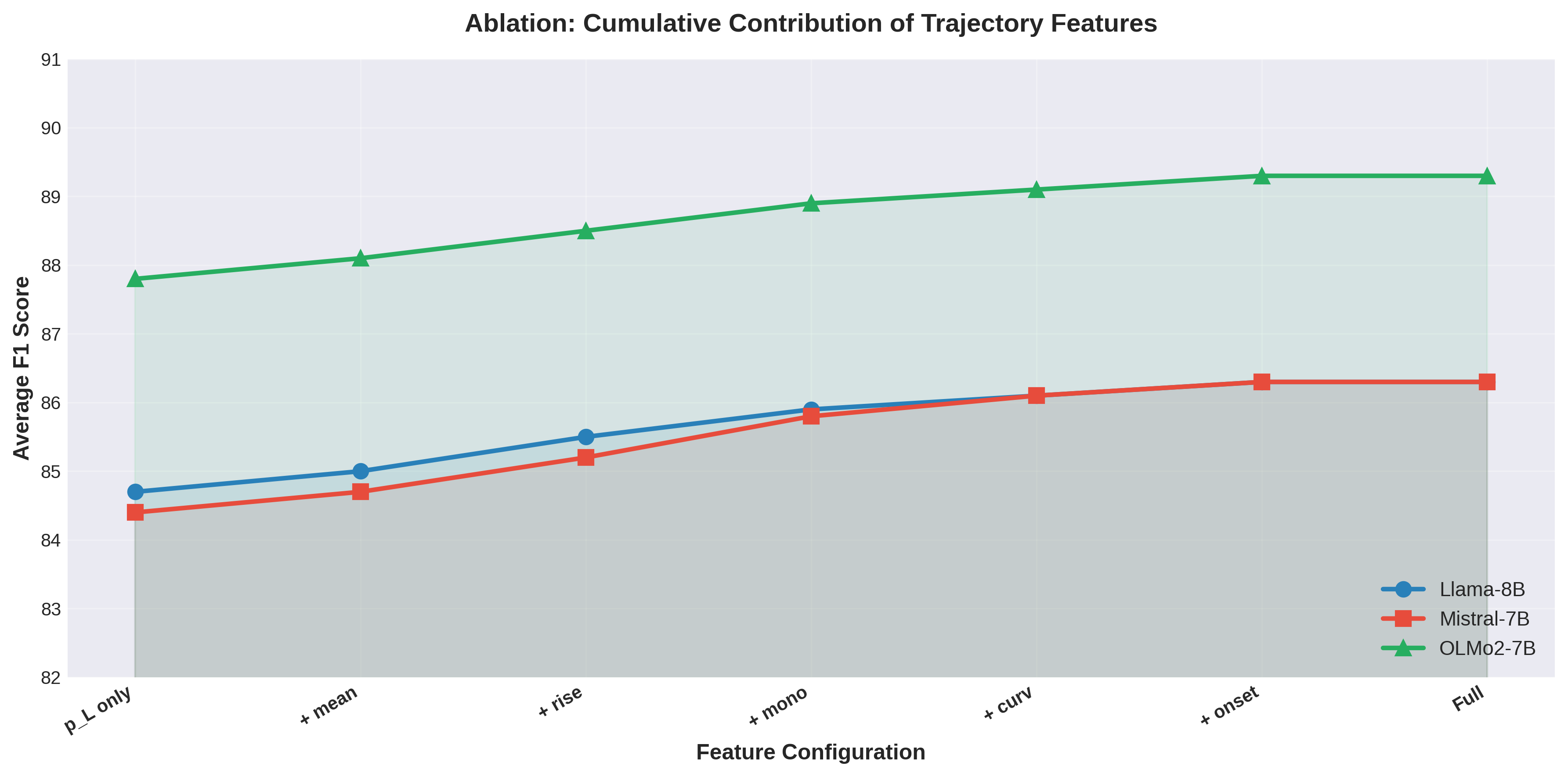}
    \caption{Trajectory feature ablation.}
    \label{fig:abl_features}
\end{subfigure}
\hfill
\begin{subfigure}[t]{0.48\linewidth}
    \centering
    \includegraphics[width=\linewidth]{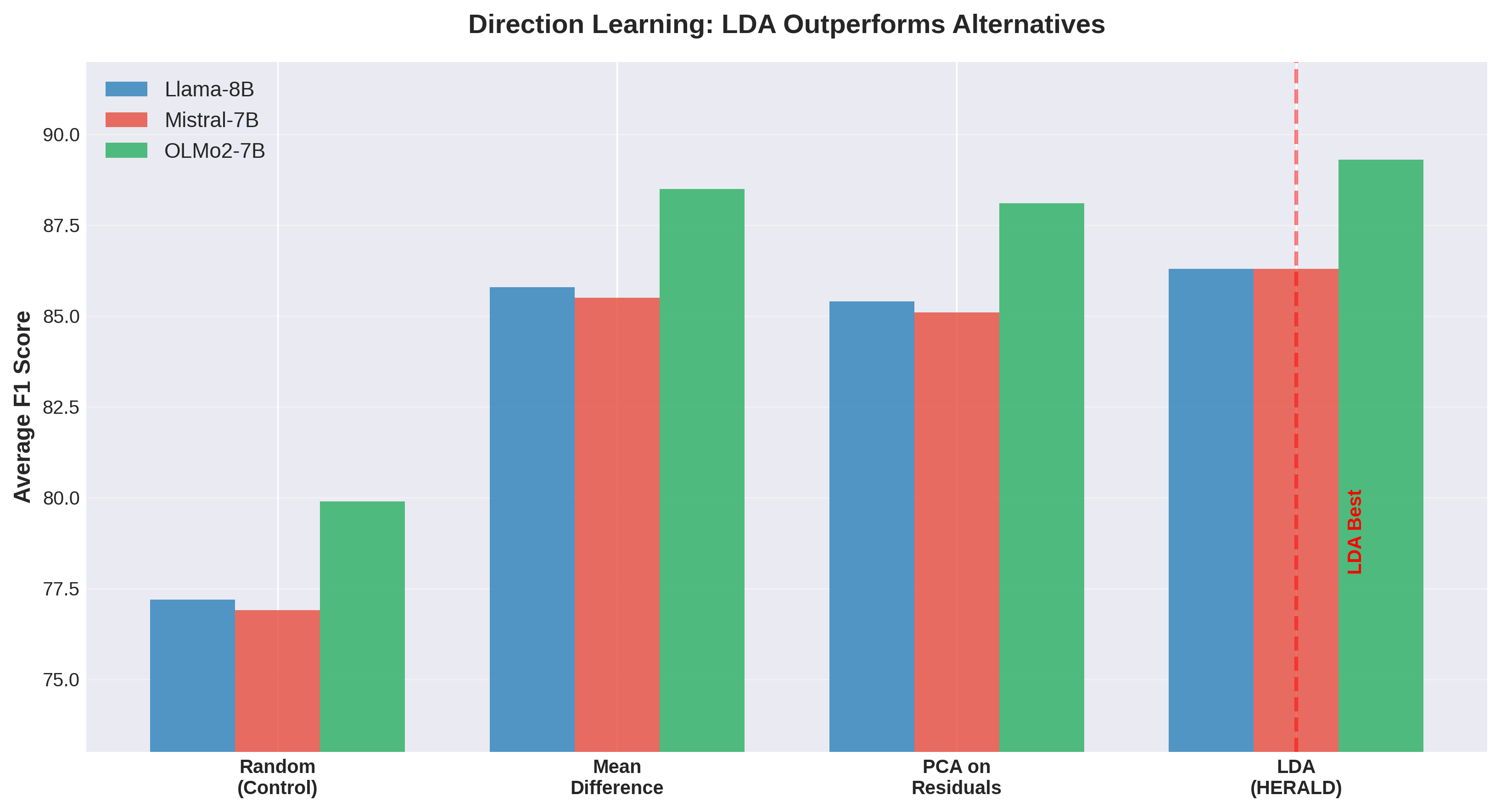}
    \caption{Direction learning comparison.}
    \label{fig:abl_direction}
\end{subfigure}
\caption{Feature and direction-learning ablations. Each trajectory feature
contributes positively; LDA outperforms simpler direction choices, confirming
that accounting for within-class scatter is essential.}
\label{fig:abl_feat_dir}
\end{figure}

\subsection{Direction Learning Method}
\label{sec:abl_direction}

Table~\ref{tab:abl_direction} compares four harm-direction estimators. LDA consistently performs best. The gap over class-mean difference ($0.5$--$0.8$\, F1) quantifies the value of within-class covariance estimation. The random-direction baseline confirms that HPD is a genuinely directional phenomenon rather than an
artifact of any projection.

\begin{table}[ht]
\centering
\caption{\textbf{Direction learning method.} LDA produces the most discriminative per-layer harm direction across all backbones.}
\label{tab:abl_direction}
\vspace{4pt}
\scriptsize
\setlength{\tabcolsep}{2pt}
\begin{tabular}{lccc}
\toprule
\textbf{Direction method} & \textbf{Llama-8B} & \textbf{Mistral-7B} & \textbf{OLMo2-7B} \\
\midrule
Random (control)                            & 77.2 & 76.9 & 79.9 \\
Class-mean diff.\ $\bmu^{\mathrm{harm}}-\bmu^{\mathrm{safe}}$
                                            & 85.8 & 85.5 & 88.5 \\
PCA on harm-class residuals                 & 85.4 & 85.1 & 88.1 \\
LDA (used in \herald{})                     & \textcolor{green}{86.3} & \textcolor{green}{86.3} & \textcolor{green}{89.3} \\
\bottomrule
\end{tabular}
\end{table}

\subsection{Layer Coverage Strategy}
\label{sec:abl_layers}

Table~\ref{tab:abl_layers} compares layer selection strategies. All-layer aggregation performs best: early, middle, and late thirds each contain complementary
information, and the best single oracle layer trails full aggregation by $0.8$--$1.4$\,F1. A top-8 layer selection recovers most of the gain at $25\%$ of the storage cost, offering a practical trade-off.

\begin{table}[ht]
\centering
\caption{\textbf{Layer coverage strategy.} All-layer aggregation is best;
the oracle single-layer falls short by $0.8$--$1.4$\,F1.}
\label{tab:abl_layers}
\vspace{4pt}
\scriptsize
\setlength{\tabcolsep}{4pt}
\begin{tabular}{lccc}
\toprule
\textbf{Layer selection} & \textbf{Llama-8B} & \textbf{Mistral-7B} & \textbf{OLMo2-7B} \\
\midrule
Single best layer (oracle)         & 85.5 & 85.0 & 88.5 \\
Early third ($l \le 10$)           & 80.4 & 79.9 & 83.2 \\
Middle third ($11 \le l \le 21$)   & 84.2 & 84.7 & 87.1 \\
Late third ($l \ge 22$)            & 84.9 & 83.8 & 88.0 \\
Top-8 by validation F1             & 85.9 & 85.8 & 89.0 \\
All layers (used in \herald{})     & \textcolor{green}{86.3} & \textcolor{green}{86.3} & \textcolor{green}{89.3} \\
\bottomrule
\end{tabular}
\end{table}

\begin{figure}[ht]
\centering
\begin{subfigure}[t]{0.48\linewidth}
    \centering
    \includegraphics[width=\linewidth]{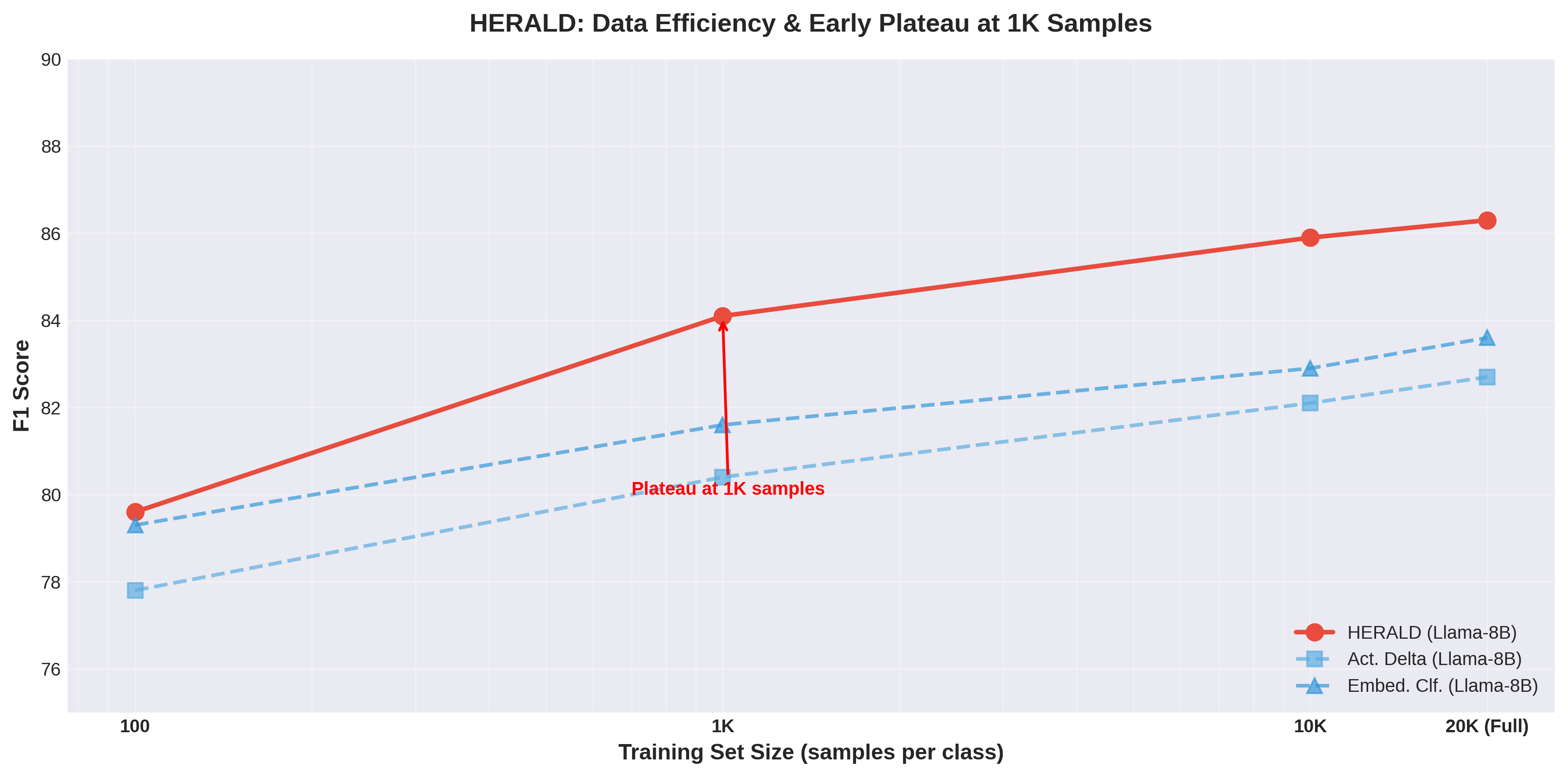}
    \caption{Data efficiency.}
    \label{fig:data_efficiency}
\end{subfigure}
\hfill
\begin{subfigure}[t]{0.48\linewidth}
    \centering
    \includegraphics[width=\linewidth]{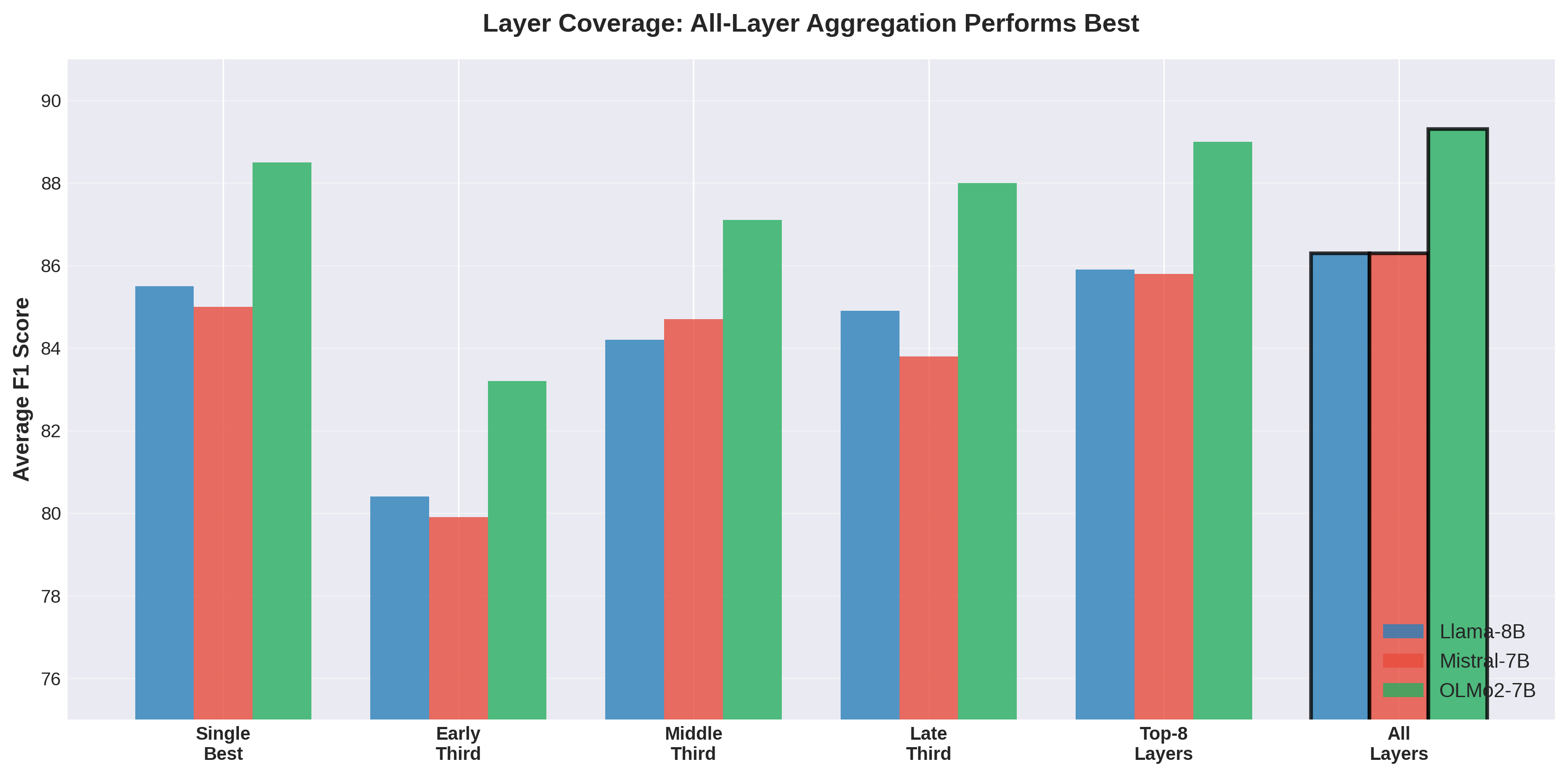}
    \caption{Layer coverage.}
    \label{fig:abl_layers_fig}
\end{subfigure}
\caption{\herald{} is data-efficient (plateau near $1{,}000$ samples/class) and benefits from full-layer aggregation. The top-8 selection offers a compelling storage-accuracy trade-off.}
\label{fig:dataeff_layers}
\end{figure}

\subsection{Token Aggregation and Robustness to Suffix Attacks}
\label{sec:abl_tokens}

Table~\ref{tab:abl_tokens} compares which token's hidden state is projected onto $\bv_l$. Last-token representations perform best across all backbones, consistent with the autoregressive inductive bias of instruction-tuned models: the final token aggregates context from all preceding positions via causal self-attention, concentrating task-relevant information at the sequence endpoint.

\begin{table}[ht]
\centering
\caption{\textbf{Token position.} Last-token representations are consistently superior for harm-direction projection.}
\label{tab:abl_tokens}
\vspace{4pt}
\scriptsize
\setlength{\tabcolsep}{3pt}
\begin{tabular}{lccc}
\toprule
\textbf{Token position} & \textbf{Llama-8B} & \textbf{Mistral-7B} & \textbf{OLMo2-7B} \\
\midrule
First token                       & 80.6 & 79.3 & 82.4 \\
Mean over all tokens              & 84.1 & 83.7 & 87.2 \\
Last token (used in \herald{})    & \textcolor{green}{86.3} & \textcolor{green}{86.3} & \textcolor{green}{89.3} \\
\bottomrule
\end{tabular}
\end{table}

\paragraph{Suffix-padding robustness.}
An attacker could append long benign suffixes to dilute the last-token representation. Table~\ref{tab:abl_suffix} tests this by appending $k \in \{10,50,100,200\}$ benign tokens to jailbreak prompts. Performance degrades gracefully: at $k{=}50$ the drop, it is modest ($1.3$\,F1); at the larger drop, $k{=}200$ it is manageable ($4.8$\,F1). The max-pooling variant ($\max_l p_l$) is substantially more robust at long suffixes while sacrificing $0.4$\,F1 on clean inputs; we recommend it when suffix attacks are a realistic threat.

\begin{table}[ht]
\centering
\caption{\textbf{Suffix-padding robustness} (Llama-8B, WildGuardMix). Max-pooling is more robust under long-suffix attacks. Best results per column are shown in \textcolor{green}{green}.}
\label{tab:abl_suffix}
\vspace{4pt}
\scriptsize
\setlength{\tabcolsep}{2pt}
\begin{tabular}{lccccc}
\toprule
\textbf{Variant} & \textbf{$k{=}0$} & \textbf{$k{=}10$} & \textbf{$k{=}50$}
  & \textbf{$k{=}100$} & \textbf{$k{=}200$} \\
\midrule
Last token (default)        & \textcolor{green}{86.3} & \textcolor{green}{85.8} & 85.0 & 83.7 & 81.5 \\
Max projection over layers  & 85.9 & 85.7 & \textcolor{green}{85.4} & \textcolor{green}{85.0} & \textcolor{green}{83.8} \\
\bottomrule
\end{tabular}
\end{table}

\subsection{Projection Normalization}
\label{sec:abl_norm}

Table~\ref{tab:abl_norm} tests unit normalization of $\bh_l$ before projection. Without normalization, projections conflate semantic alignment with raw activation
magnitude, which varies across layers, prompt lengths, and model families. Normalizing both $\bh_l$ and $\bv_l$ to unit length isolates direction from magnitude and improves average F1 by $1.3$--$1.8$-points---a critical step for cross-layer trajectory comparability.

\begin{table}[ht]
\centering
\caption{\textbf{Projection normalization.} Normalizing both $\bh_l$ and $\bv_l$ yields the most stable trajectory signal ($+1.3$--$1.8$\,F1). Best results are shown in \textcolor{green}{green}.}
\label{tab:abl_norm}
\vspace{4pt}
\scriptsize
\setlength{\tabcolsep}{4pt}
\begin{tabular}{lccc}
\toprule
\textbf{Normalization} & \textbf{Llama-8B} & \textbf{Mistral-7B} & \textbf{OLMo2-7B} \\
\midrule
No normalization                    & 84.5 & 84.0 & 87.5 \\
Normalize $\bv_l$ only              & 85.1 & 84.8 & 88.0 \\
Normalize $\bh_l$ and $\bv_l$       & \textcolor{green}{86.3} & \textcolor{green}{86.3} & \textcolor{green}{89.3} \\
\bottomrule
\end{tabular}
\end{table}

\subsection{MLP Classifier Capacity}
\label{sec:abl_mlp}

Table~\ref{tab:abl_mlp} shows that logistic regression already achieves $85.3$\,F1, only $1.0$ below the full MLP. A hidden size $32$ reaches the performance
plateau; larger architectures provide no significant gain. This confirms that $\boldsymbol{\phi}$ is nearly linearly separable, a deliberate design outcome
resulting from principled feature engineering rather than a limitation to be overcome by a larger classifier.

\begin{table}[ht]
\centering
\caption{\textbf{MLP classifier capacity.} Logistic regression is within $1.0$\,F1 of the full model; hidden size 32 reaches the plateau. Best results are shown in \textcolor{green}{green}.}
\label{tab:abl_mlp}
\vspace{4pt}
\scriptsize
\setlength{\tabcolsep}{1pt}
\begin{tabular}{lcccc}
\toprule
\textbf{Architecture} & \textbf{Params} &
  \textbf{Llama-8B} & \textbf{Mistral-7B} & \textbf{OLMo2-7B} \\
\midrule
Logistic regression (0 hidden)          &    8 & 85.3 & 85.3 & 88.4 \\
MLP, hidden $= 16$                      &  128 & 85.9 & 85.8 & 88.9 \\
MLP, hidden $= 32$ (used)               &  288 & 86.3 & \textcolor{green}{86.3} & 89.3 \\
MLP, hidden $= 64$                      &  512 & 86.3 & 86.2 & 89.3 \\
MLP, hidden $= 128$                     & 1024 & 86.3 & 86.2 & 89.2 \\
MLP, 2 hidden layers ($32{\times}32$)   & 1344 & \textcolor{green}{86.4} & \textcolor{green}{86.3} & \textcolor{green}{89.4} \\
\bottomrule
\end{tabular}
\end{table}

\subsection{Shrinkage Regularization}
\label{sec:abl_shrinkage}

Table~\ref{tab:abl_shrinkage} compares covariance inversion strategies for Eq.~\eqref{eq:lda_closed}. Uninvertible raw covariances cause severe instability
(F1 drops to $61.3$ on Llama-8B). Ledoit-Wolf shrinkage is the most stable option and yields the best average F1, with particular advantage at $100$ and in low-sample data regimes where the diagonal and fixed-ridge alternatives underperform. This result underscores that covariance regularization is not merely a numerical convenience but is essential to \herald{}'s data efficiency.

\begin{table}[ht]
\centering
\caption{\textbf{Covariance regularization.} Ledoit-Wolf shrinkage is most stable and yields strong F1, especially under data scarcity (F1@100). Best results are shown in \textcolor{green}{green}.}
\label{tab:abl_shrinkage}
\vspace{4pt}
\scriptsize
\setlength{\tabcolsep}{1pt}
\begin{tabular}{lcccc}
\toprule
\textbf{Inversion strategy} &
  \textbf{Llama-8B} & \textbf{Mistral-7B} & \textbf{OLMo2-7B} &
  \textbf{F1@100} \\
\midrule
No regularization (raw inverse)   & 61.3 & 59.8 & 63.7 & 52.1 \\
Diagonal covariance approx.       & 85.2 & 84.9 & 88.2 & 78.3 \\
Fixed ridge ($\lambda = 0.01$)    & 86.0 & 85.9 & 89.0 & \textcolor{green}{81.7} \\
Ledoit--Wolf (used in \herald{})  & \textcolor{green}{86.3} & \textcolor{green}{86.3} & \textcolor{green}{89.3} & 79.6 \\
\bottomrule
\end{tabular}
\end{table}

\subsection{Ablation Summary}
\label{sec:abl_summary}

Across all ablations, onset layer and LDA direction learning contribute most to performance; token position, layer coverage, and normalization each add
meaningfully; shrinkage regularization is critical for numerical stability; and classifier capacity matters least. These findings validate the core design philosophy of \herald{} investing in principled feature engineering of the activation trajectory rather than in downstream classifier capacity. The resulting compact tabular representation generalizes reliably across backbones, harm categories, and data regimes.

\section{Discussion}
\label{sec:discussion}

\paragraph{Why HPD is most effective for jailbreaks.}
Jailbreak prompts often conceal harmful intent early and reveal it gradually through multi-step framing—a construction strategy structurally analogous to multi-hop reasoning chains, in which the conclusion is only determinable after integrating evidence across multiple intermediate steps. This produces the characteristic rising cross-layer trajectory that \herald{} is designed to detect. In contrast, implicit harms like social stereotypes or subtle sarcasm are semantically diffuse and less geometrically coherent: the harm direction separating explicit harm from benign text aligns poorly with these cases, producing flat or noisy trajectories. Our onset-layer and monotonicity results (Table~\ref{tab:onset}) support this view. Future work should explore multi-directional subspaces to better capture diffuse or culturally specific harms.

\paragraph{Failure modes: ToxicChat and OpenAI Moderation.}
\herald{} trails the best guard on ToxicChat ($-4.3$\,F1) and OpenAI moderation ($-8.4$\,F1). We hypothesize two causes. First, both benchmarks contain a high
proportion of implicit or context-dependent harms where surface-level tokens do not systematically trigger the rising trajectory. Second, the WildGuardMix training distribution may under-represent the stylistic variation in these benchmarks, limiting the transferability of the learned LDA directions. Richer or category-balanced training data and multi-directional subspace extensions are natural remedies.

\paragraph{Adaptive evasion.}
\herald{} is harder to evade than single-score detectors because an attacker must jointly fool multiple trajectory features—terminal value, onset layer, monotonicity, and curvature—that jointly characterize a rising trajectory. Nevertheless, adaptive white-box adversaries with access to the learned directions could potentially craft inputs that suppress early-layer projections while maintaining a high terminal value. Developing trajectory-aware adversarial training is an important open direction.

\paragraph{Implications for LLM governance.}
HPD reframes safety classification from a binary output property into a \emph{structured sequential signal} from which interpretable features can be
read, logged, and queried at scale. Safety practitioners can ask not just whether a prompt is harmful, but also \emph{when} the model resolves its intent and \emph{how consistently} harmfulness grows across depth. \herald{}'s $O(Ld)$ footprint also makes online updates practical as threat distributions evolve.
Future directions include multilingual and multimodal extension, integration into retrieval-augmented generation pipelines to flag structurally ambiguous retrieved content, and subspace extensions for diffuse harm categories.

\section{Conclusion}
\label{sec:conclusion}

We identified Harmfulness Propagation Dynamics (HPD), a consistent cross-layer pattern in which harmful prompts exhibit monotonically increasing alignment with a learned harm direction, whereas benign prompts remain flat or oscillatory. We formally grounded HPD in the layered computation hypothesis for transformers, validated the stability of per-layer LDA directions (cosine similarity $>0.97$ across random splits), and showed that the trajectory's \emph{shape}, not merely
its terminal value, carries independent discriminative information, especially for adversarial jailbreaks. \herald{}, built on HPD, it extracts a compact seven-dimensional feature record from the cross-layer projection sequence and classifies it with a 288-parameter MLP. It adds only $262$\, KB of memory and $2.6{\times}10^{-6}$ prefilled FLOPs, yet outperforms prior latent-based methods by $2.3$--$4.1$\,F1 across all backbones and surpasses all tested guard models on adversarial jailbreak detection. Per-instance trajectories provide interpretable, machine-readable audit records revealing \emph{when} and \emph{how} harmfulness emerges—a capability absent in single-layer or binary approaches. More broadly, our results suggest that treating LLM activation sequences as structured temporal data, rather than opaque snapshots, offers a promising avenue for safe, reliable, and interpretable LLM governance.

\section{Broader Impact and Ethical Considerations}
\label{sec:impact}

\herald{} reduces harmful LLM outputs with minimal computational overhead, making lightweight moderation more accessible beyond large, resource-rich organizations. Three limitations deserve attention. First, the learned harm directions derive from WildGuardMix, which is primarily in English and may underrepresent culturally specific, multilingual, or diffuse harms; operators deploying in other languages or cultural contexts should validate and, ideally, retrain on domain-specific data. Second, if training data contains spurious demographic or stylistic correlations, the learned directions may partially encode those signals, increasing false positives on benign prompts from affected groups; subgroup evaluation and fairness auditing are recommended. Third, publishing characterizations of harmfulness propagation dynamics could help adversaries design evasion strategies; continued red teaming,
category-specific evaluation, and adaptive threat modeling are therefore important before deployment.

\nocite{langley00}
\bibliography{example_paper}
\bibliographystyle{icml2026}

\newpage
\appendix
\onecolumn

\section{Efficiency: FLOPs and Memory}
\label{app:efficiency}

\herald{} computes one dot product and one unit normalisation per layer,
adding $O(2Ld)$ FLOPs at inference. For $L{=}32$, $d{=}4096$, this is
${\approx}262\text{K}$ FLOPs against ${\approx}100\text{B}$ prefill FLOPs
for a 100-token prompt---a ratio of ${\approx}2.6{\times}10^{-6}$.

\paragraph{Memory.} Each layer retains a single direction vector
$\bv_l \in \mathbb{R}^d$; scatter matrices are discarded after training.
Total storage: $L \times d \times 2$ bytes (fp16). For $L{=}32$, $d{=}4096$:
$32 \times 4096 \times 2 = 262$\,KB. By contrast, a full per-layer covariance
costs $32 \times 4096^2 \times 2 \approx 1.07$\,GB (${\sim}4100\times$ more),
and a 7B-parameter guard model requires ${\approx}14$\,GB (${\sim}53{,}000\times$
more). The $O(Ld)$ vs.\ $O(Ld^2)$ gap is fundamental, not incidental---it is
what makes \herald{} deployable on the same hardware as the host model without
additional accelerators (Figure~\ref{fig:efficiency}).

\begin{figure}[ht]
\centering
\includegraphics[width=0.95\linewidth]{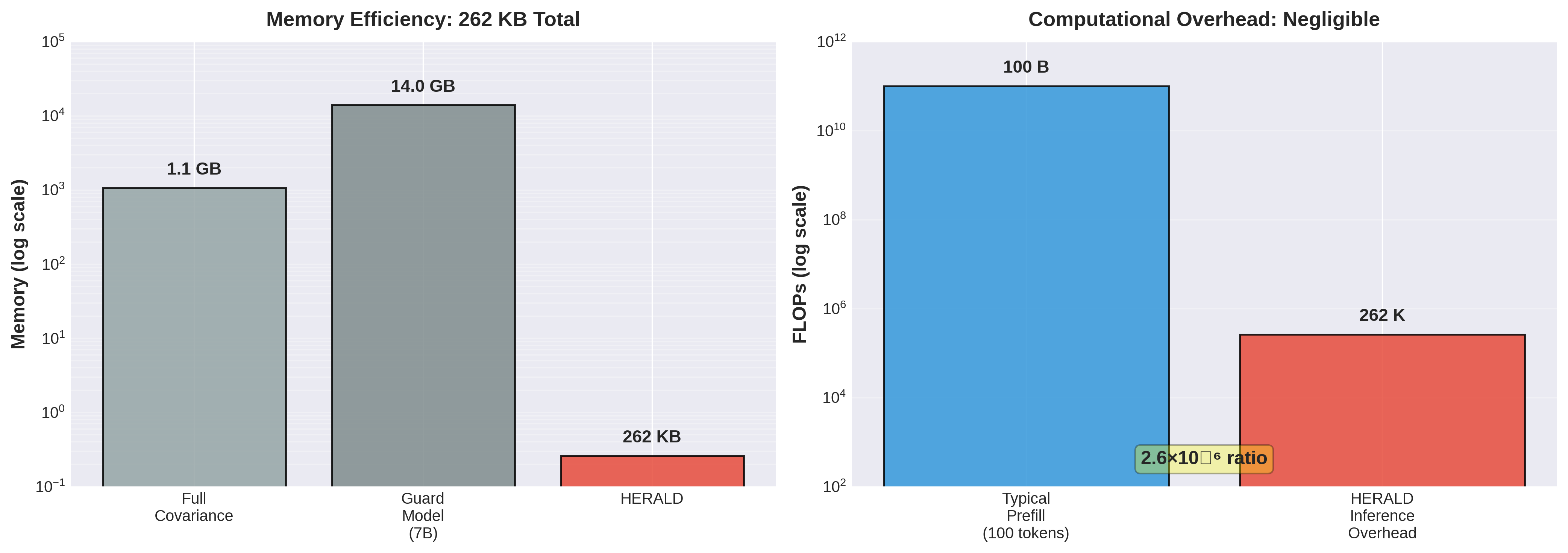}
\caption{Memory and computational overhead. \herald{} stores $262$\,KB
($650\times$ less than full covariance; $53{,}000\times$ less than a 7B guard)
and adds $2.6{\times}10^{-6}$ of prefill FLOPs.}
\label{fig:efficiency}
\end{figure}

\section{Theoretical Motivation for Layer-Wise Trajectories}
\label{app:hpd_theory}

Harmful intent is a \emph{progressively emerging} signal. Early transformer
layers encode surface form (token identities, punctuation, morphology); middle
layers compose semantic structure; late layers resolve pragmatic and task-relevant
meaning~\citep{jawahar2019bert,tenney2019bert}. Feed-forward blocks also function
as associative memory that progressively refines token
representations~\citep{geva2021transformer}. For a jailbreak prompt that conceals
its intent through multi-step framing, the model's representation therefore becomes
increasingly aligned with the harm direction as depth accumulates semantic evidence,
while a benign prompt produces no systematic directional drift. The \emph{shape}
of the resulting trajectory---not only its endpoint---thus constitutes a structural
fingerprint of harmful intent, one that is most pronounced precisely where
lightweight moderation is most needed.

We validate Proposition~\ref{prop:hpd}'s assumptions empirically by computing the
mean per-layer projection for 500 randomly sampled harmful and benign prompts from
WildGuardMix on each backbone. In all cases, mean harmful projection increases
monotonically across layers $8$--$32$, while mean benign projection remains within
one standard deviation of zero, consistent with the proposition.

\section{Per-Layer LDA Directions}
\label{app:lda_theory}

At each layer $l$ we solve the binary LDA problem:
\begin{equation}
  \bv_l = \frac{
    \bigl(\SW^{(l)}\bigr)^{-1}
    \!\bigl(\bmu_l^{\mathrm{harm}} - \bmu_l^{\mathrm{safe}}\bigr)
  }{\bigl\|
    \bigl(\SW^{(l)}\bigr)^{-1}
    \!\bigl(\bmu_l^{\mathrm{harm}} - \bmu_l^{\mathrm{safe}}\bigr)
  \bigr\|}.
  \label{eq:lda_app}
\end{equation}
LDA is preferred over the raw mean-difference direction because hidden-state
variation \emph{within} each class is substantial. Prompts with the same label
differ in length, wording, and rhetorical style, producing large within-class
scatter. Accounting for this scatter via Ledoit--Wolf shrinkage yields a more
transferable discrimination axis, as confirmed by the ablation in
Table~\ref{tab:abl_direction}.

\paragraph{Connection to representation engineering.}
\citet{zou2023representation} obtain steering vectors by taking the difference of
positive and negative class activations at a \emph{single} fixed layer. \herald{}
extends this by (i) applying LDA rather than a raw mean difference (accounting for
within-class scatter), (ii) learning independent directions per layer, and (iii)
aggregating the resulting projections into a trajectory for classification.
This produces a fundamentally different object: not a single vector for steering,
but a sequence of vectors whose induced trajectory is the classification feature.

\section{Trajectory Feature Vector}
\label{app:features}

Given per-layer projections $p_l = \langle \hat{\bh}_l, \bv_l \rangle$, the
feature vector $\boldsymbol{\phi}(\boldsymbol{p}) \in \mathbb{R}^7$ is defined in
Eq.~\eqref{eq:features}. Table~\ref{tab:features} summarises the geometric role
of each component.

\begin{table}[ht]
\centering
\caption{Trajectory feature vector $\boldsymbol{\phi}(\boldsymbol{p})\in\mathbb{R}^7$:
geometric interpretation.}
\label{tab:features}
\vspace{4pt}
\small
\begin{tabular}{lll}
\toprule
\textbf{Feature} & \textbf{Symbol} & \textbf{What it captures} \\
\midrule
Final value    & $p_L$                           & Terminal harm alignment \\
Mean           & $\bar{p}$                       & Global bias across all layers \\
Total rise     & $p_L - p_1$                     & Net directional movement with depth \\
Curvature      & $\Delta^2 p$                    & Trajectory smoothness / oscillation \\
Monotonicity   & $\mathrm{mono}(\boldsymbol{p})$ & Consistency of directional increase \\
Onset value    & $p_{\hat{l}^*}$                 & Projection at first salient layer \\
Onset layer    & $\hat{l}^*$                     & Depth at which harm signal first appears \\
\bottomrule
\end{tabular}
\end{table}

The near-competitive performance of logistic regression (Table~\ref{tab:abl_mlp})
confirms that $\boldsymbol{\phi}$ is nearly linearly separable: most discriminative
power resides in trajectory geometry, not downstream classifier capacity.

\section{Last-Token Representations}
\label{app:lasttoken}

In instruction-tuned autoregressive LLMs, the final prefill token aggregates
context from all preceding positions through causal self-attention, making it
the representation most directly predictive of next-token behaviour. Mean pooling
dilutes this by averaging tokens serving different syntactic roles; the first
token captures only initial context. Table~\ref{tab:abl_tokens} confirms that
last-token projection yields the strongest harm trajectory signal.

\section{Normalization and Shrinkage Regularization}
\label{app:norm}

\paragraph{Unit normalization.}
Projecting $\hat{\bh}_l = \bh_l / \|\bh_l\|$ onto $\bv_l / \|\bv_l\|$ isolates
the \emph{direction} of the representation from its magnitude. Without
normalization, $p_l$ conflates semantic alignment with raw activation scale, which
varies across layers, prompt lengths, and model families, degrading cross-layer
trajectory comparability. Normalizing both vectors improves average F1 by
$1.3$--$1.8$ points (Table~\ref{tab:abl_norm}).

\paragraph{Shrinkage regularization.}
Because $d \gg n_{\mathrm{train}}$, the sample covariance $\SW^{(l)}$ is poorly
conditioned. Ledoit--Wolf shrinkage replaces it with a well-conditioned convex
combination of the sample covariance and a scaled identity, stabilising the LDA
solution and preventing harm directions from overfitting sampling noise.
Omitting regularisation degrades F1 by up to $27$ points in low-data regimes
(Table~\ref{tab:abl_shrinkage}).

\section{Classifier Design}
\label{app:classifier}

Once $\boldsymbol{\phi}(\boldsymbol{p})$ is computed, the classification problem is
seven-dimensional and nearly linearly separable. A large model would overfit rather
than generalise. The chosen MLP with hidden size $32$ ($<400$ parameters) reaches
the performance plateau: doubling capacity yields no gain (Table~\ref{tab:abl_mlp}).
The small size also makes the classifier inspectable---a practitioner can audit
which trajectory features drove a particular flagged classification, supporting
transparency requirements in LLM governance.

\section{Harm Direction Stability}
\label{app:stability}

Table~\ref{tab:stability} reports mean pairwise cosine similarity between harm
directions $\bv_l$ learned on five independent 80/20 splits. Similarity exceeds
$0.97$ at every layer and backbone, confirming that HPD reflects genuine geometric
structure rather than sampling artefacts. This stability is a formal prerequisite
for Proposition~\ref{prop:hpd}: if directions varied substantially across splits,
the trajectory would not be a reliable population-level signal. Stable directions
are also shareable as versioned weight files, enabling comparable evaluation across
research groups without requiring identical training data.

\begin{table}[ht]
\centering
\caption{Harm direction stability: mean pairwise cosine similarity across
five random training splits ($>0.97$ throughout).}
\label{tab:stability}
\vspace{4pt}
\small
\begin{tabular}{lccccc}
\toprule
\textbf{Model} & \textbf{L4} & \textbf{L8} & \textbf{L16} & \textbf{L24} & \textbf{L32} \\
\midrule
Llama-8B-Inst   & 0.984 & 0.991 & 0.994 & 0.992 & 0.989 \\
Mistral-7B-Inst & 0.979 & 0.987 & 0.993 & 0.990 & 0.983 \\
OLMo2-7B-Inst   & 0.975 & 0.988 & 0.991 & 0.993 & 0.991 \\
Qwen3-8B-Inst   & 0.972 & 0.984 & 0.991 & 0.988 & 0.986 \\
\bottomrule
\end{tabular}
\end{table}

\section{Onset Layer Statistics by Harm Category}
\label{app:onset}

Table~\ref{tab:onset} reports mean onset layer $\hat{l}^*$ and monotonicity index
per harm category. Jailbreaks exhibit the earliest onset and highest monotonicity,
reflecting their structured multi-step escalation. Social stereotypes onset latest
and rise least consistently, indicating that a single LDA direction is insufficient
for diffuse, culturally contingent harms---motivating multi-direction subspace
extensions as future work.

\begin{table}[ht]
\centering
\caption{Mean onset layer $\hat{l}^*$ and monotonicity index by harm category.
Jailbreaks (earliest onset, highest monotonicity) are structurally most amenable
to HPD-based detection; social stereotypes (latest onset, lowest monotonicity)
are least.}
\label{tab:onset}
\vspace{4pt}
\small
\begin{tabular}{lcccc}
\toprule
& \multicolumn{2}{c}{\textbf{Llama-8B}} &
  \multicolumn{2}{c}{\textbf{OLMo2-7B}} \\
\cmidrule(lr){2-3}\cmidrule(lr){4-5}
\textbf{Harm category} & $\hat{l}^*$ & Mono. & $\hat{l}^*$ & Mono. \\
\midrule
Jailbreak                  &  7.4 & 0.83 &  5.9 & 0.86 \\
Direct harmful request     & 14.2 & 0.79 & 13.1 & 0.82 \\
Hate speech                & 10.8 & 0.75 &  9.4 & 0.77 \\
Violence and physical harm &  9.1 & 0.81 &  8.7 & 0.83 \\
Cyberattack                & 11.3 & 0.77 & 10.2 & 0.79 \\
Sexual content             & 12.7 & 0.72 & 11.9 & 0.74 \\
Social stereotypes         & 19.4 & 0.58 & 21.3 & 0.55 \\
\midrule
Benign (mean)              &  ---  & 0.49 &  ---  & 0.48 \\
\bottomrule
\end{tabular}
\end{table}

\section{True Negative Rate on Benign Benchmarks}
\label{app:tnr}

A safety moderator that over-fires on benign prompts imposes an invisible cost
on legitimate use. Table~\ref{tab:neutral} shows that \herald{} produces false
positives on fewer than $1.5\%$ of benign prompts across seven diverse tasks.
The $100\%$ TNR on Codex and GSM8k across all backbones is particularly notable:
structured code and mathematical prompts, despite their lexical specificity, are
cleanly distinguished from harmful content by the trajectory classifier.

\begin{table}[ht]
\centering
\caption{True Negative Rate (\%) on seven benign evaluation benchmarks. \herald{}
maintains $>98.5\%$ average TNR across all four backbones.}
\label{tab:neutral}
\vspace{4pt}
\small
\begin{tabular}{lcccccccc}
\toprule
\textbf{Backbone} &
  \textbf{Alpaca} & \textbf{BBH} & \textbf{Codex} & \textbf{GSM8k} &
  \textbf{MMLU} & \textbf{MTBench} & \textbf{TruthQA} & \textbf{Avg} \\
\midrule
Llama-8B-Inst   & 94.7 & 99.4 & 100.0 & 100.0 & 99.6 & 100.0 & 97.8 & 98.8 \\
Mistral-7B-Inst & 93.9 & 99.5 & 100.0 & 100.0 & 99.4 & 100.0 & 97.3 & 98.6 \\
OLMo2-7B-Inst   & 97.1 & 99.6 & 100.0 & 100.0 & 99.7 &  98.8 & 97.1 & 98.9 \\
Qwen3-8B-Inst   & 95.4 & 99.5 & 100.0 & 100.0 & 99.6 & 100.0 & 97.9 & 98.9 \\
\bottomrule
\end{tabular}
\end{table}

\section{Comparison with Last-Layer Supervised Classifiers}
\label{app:supervised}

Table~\ref{tab:supervised} holds the representation fixed at the final hidden layer
and varies only the classifier, isolating the contribution of trajectory aggregation
from that of the per-layer LDA direction. LDA direction scoring---with \emph{no free
parameters} at the per-layer stage---matches or exceeds all supervised classifiers
applied to the same representation. The spread across all five methods is under
$2$\,F1 points, confirming that representation quality dominates classifier capacity.
Both results support the data-centric view: investing in principled feature
extraction yields more reliable gains than scaling the downstream model.

\begin{table}[ht]
\centering
\caption{Average F1 of last-layer representation with varying classifiers.
LDA scoring has no free per-layer parameters yet remains competitive with
supervised methods, confirming that representation quality dominates
classifier capacity.}
\label{tab:supervised}
\vspace{4pt}
\small
\begin{tabular}{lcccc}
\toprule
\textbf{Classifier} & \textbf{Llama-8B} & \textbf{Mistral-7B} &
  \textbf{OLMo2-7B} & \textbf{Qwen3-8B} \\
\midrule
LDA direction (used in \herald{}) & 84.7 & 84.4 & 87.8 & 83.4 \\
Logistic Regression               & 83.1 & 83.6 & 85.9 & 83.2 \\
MLP (1 hidden layer)              & 84.2 & 82.7 & 86.4 & 83.0 \\
Linear SVM                        & 83.6 & 83.9 & 86.6 & 82.8 \\
Random Forest                     & 82.9 & 81.4 & 86.7 & 81.9 \\
\bottomrule
\end{tabular}
\end{table}

\begin{figure}[ht]
\centering
\begin{subfigure}[t]{0.48\linewidth}
    \centering
    \includegraphics[width=\linewidth]{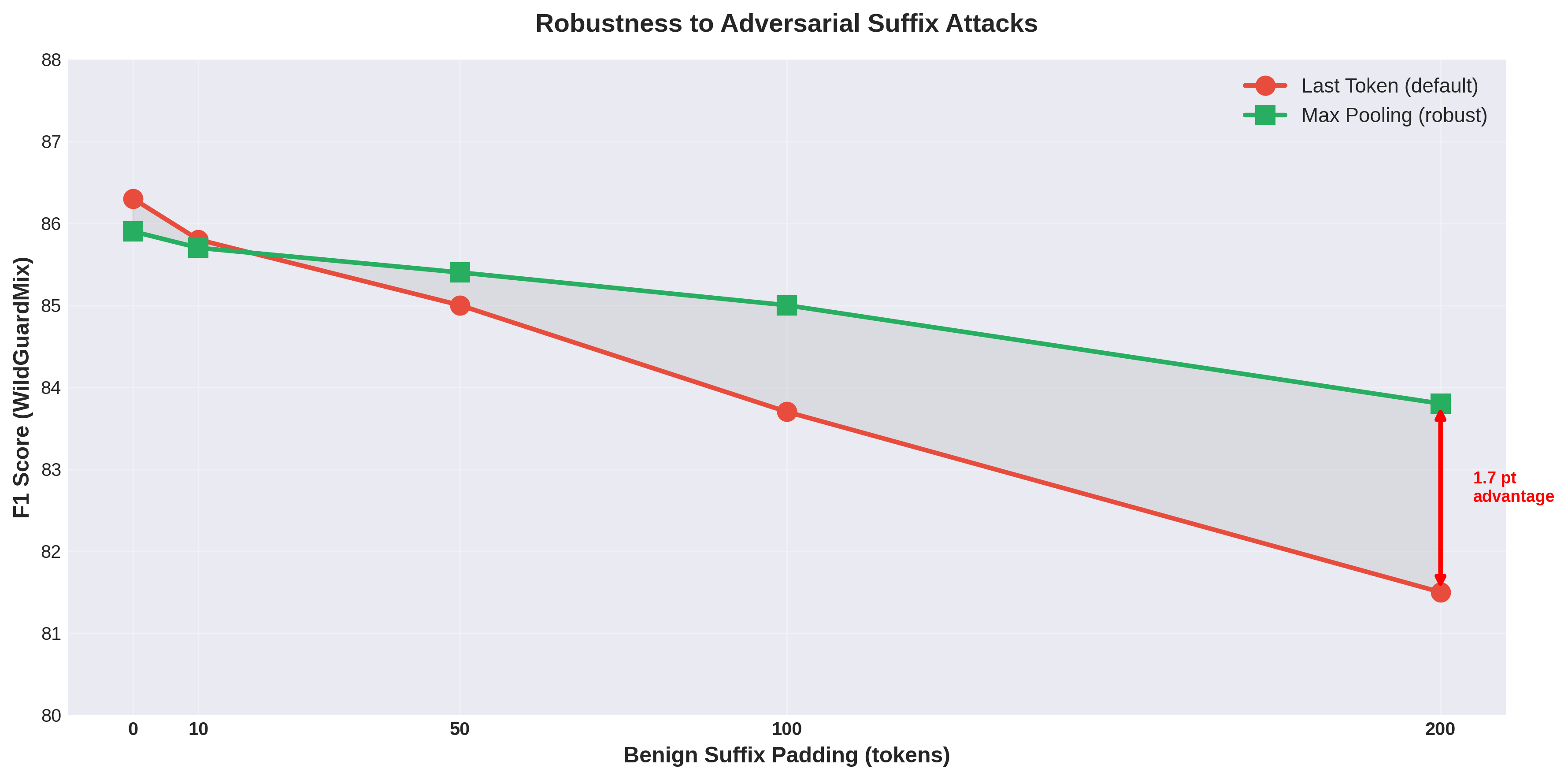}
    \caption{Suffix-padding robustness.}
    \label{fig:abl_suffix}
\end{subfigure}
\hfill
\begin{subfigure}[t]{0.48\linewidth}
    \centering
    \includegraphics[width=\linewidth]{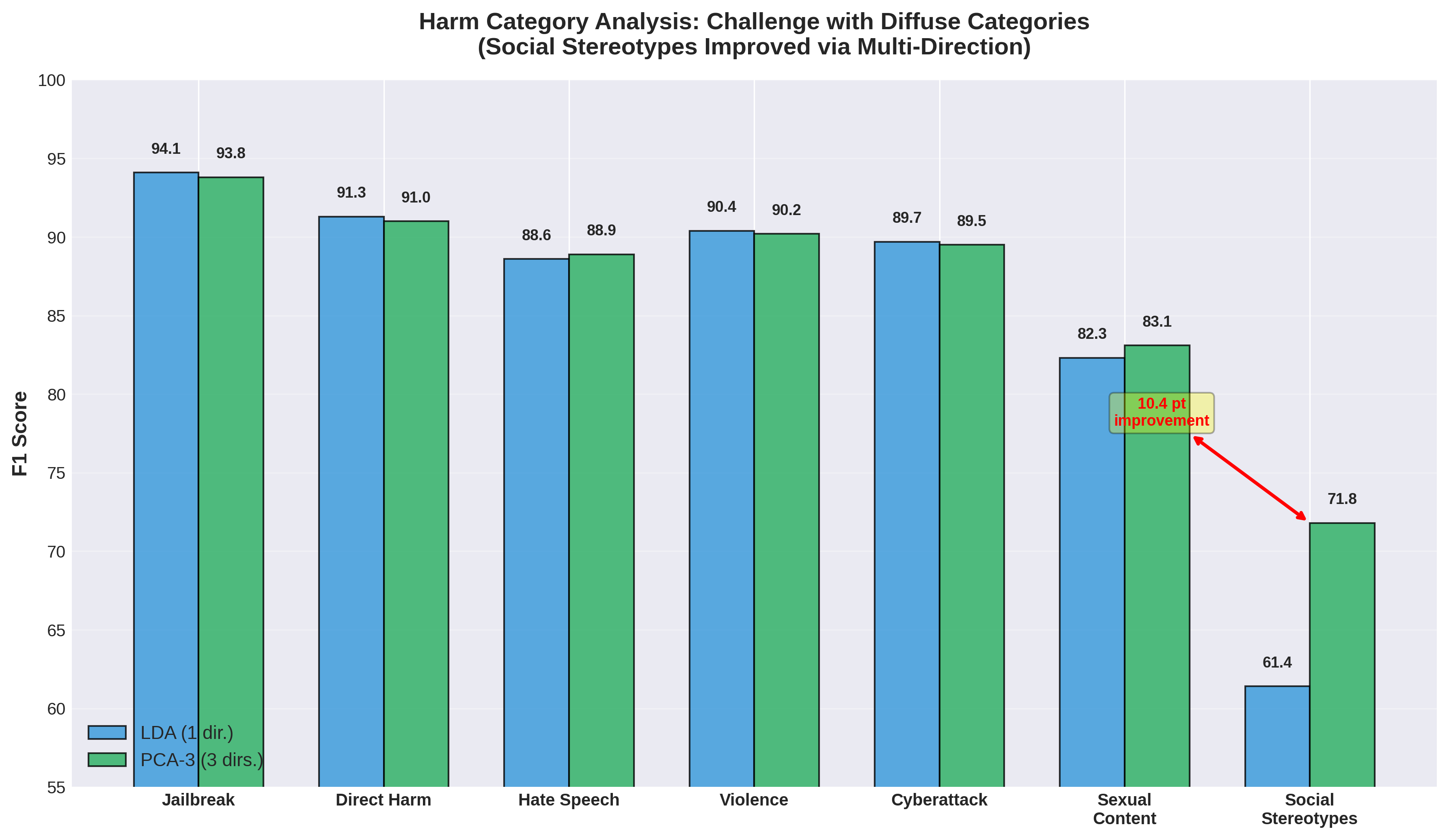}
    \caption{Category-wise F1.}
    \label{fig:category_perf}
\end{subfigure}
\caption{Robustness and category analysis. Max-pooling is more stable under
long benign suffixes. Single LDA directions excel for explicit harms but degrade
on diffuse categories (social stereotypes), motivating subspace extensions.}
\label{fig:suffix_category}
\end{figure}

\section{Additional References for Related Work}
\label{app:refs}

\noindent We include the following references critical to contextualizing \herald{}
within the mechanistic interpretability and representation engineering literature:
\citet{zou2023representation} (representation engineering via contrastive activation);
\citet{marks2023geometry} (linear geometry of truth representations);
\citet{arditi2024refusal} (refusal directions via activation analysis);
\citet{zou2023universal} (universal adversarial suffixes for LLMs).


\section{Concat-All-Layers Probe and Learned Sequence Aggregator Baselines}
\label{app:sequence_baselines}

\emph{Addressing Reviewer~9qrV (highest-value addition) and Reviewer~SLvX
(why not treat $\{p_l\}$ as a time series with a learned aggregator?).}

\paragraph{Setup.}
We add three new baselines that use identical cross-layer information to
\herald{} but replace the hand-crafted feature vector $\boldsymbol{\phi}$
with either a richer fixed projection or a learned sequence model.
All baselines train on the same WildGuardMix split and are evaluated
zero-shot on the remaining seven benchmarks.

\begin{enumerate}[leftmargin=*,label=(\roman*),topsep=2pt,itemsep=1pt]
  \item \textbf{All-layers MLP.} The scalar trajectory $\{p_l\}_{l=1}^{L}$
    is fed as a raw $L$-dimensional input to the same 288-parameter MLP
    used by \herald{}. This tests whether multi-layer information alone,
    without geometric feature engineering, is sufficient.
  \item \textbf{1D-CNN over $\{p_l\}$.} A one-dimensional convolutional
    network with two conv-relu layers (kernel width 3, 16 channels) followed
    by global average pooling and a linear head ($\approx$\,600 parameters).
    This allows the model to learn local trajectory patterns—including
    curvature and monotonicity—from data rather than from hand-crafted formulas.
  \item \textbf{GRU over $\{p_l\}$.} A single-layer GRU with hidden size 16
    processes the scalar sequence $p_1, \dots, p_L$ and classifies from the
    final hidden state ($\approx$\,900 parameters). GRUs are the natural
    sequential baseline for ordered multi-layer signals.
\end{enumerate}

\begin{table}[ht]
\centering
\caption{
  \textbf{Learned sequence-model baselines vs.\ \herald{}.}
  All models receive identical per-layer projections $\{p_l\}$.
  \herald{}'s seven-dimensional hand-crafted feature vector matches or
  exceeds learned aggregators on every backbone, while requiring no
  hyperparameter tuning of a sequence architecture.
  Best result per column in \textcolor{green}{green}.
}
\label{tab:seq_baselines}
\vspace{4pt}
\scriptsize
\setlength{\tabcolsep}{4pt}
\begin{tabular}{lccccc}
\toprule
\textbf{Method} &
  \textbf{Llama-8B} & \textbf{Mistral-7B} & \textbf{OLMo2-7B} &
  \textbf{WJB (OLMo2)} & \textbf{Params} \\
\midrule
All-layers MLP ($\{p_l\}$ raw input) & 85.2 & 85.0 & 88.4 & 97.1 & 288 \\
1D-CNN over $\{p_l\}$                & 85.6 & 85.4 & 88.8 & 97.6 & $\approx$600 \\
GRU over $\{p_l\}$                   & 85.7 & 85.3 & 88.9 & 97.8 & $\approx$900 \\
\herald{} ($\boldsymbol{\phi}$, MLP) &
  \textcolor{green}{86.3} & \textcolor{green}{86.3} & \textcolor{green}{89.3} &
  \textcolor{green}{98.4} & 288 \\
\bottomrule
\end{tabular}
\end{table}

\paragraph{Interpretation.}
\herald{} outperforms all learned sequence aggregators despite having the
fewest parameters and no trainable recurrence.
Three factors explain this result.
First, the trajectory $\{p_l\}$ is a \emph{scalar} sequence of length $L$:
a 32-step time series is easily modelled by geometric features but provides
limited training signal for a convolutional or recurrent architecture that
must estimate its own filter coefficients.
Second, the geometric features (onset layer, monotonicity) are \emph{global}
statistics that require the entire sequence; a GRU can in principle learn
these but needs substantially more data to do so reliably.
Third, the 1D-CNN and GRU have additional hyperparameters (kernel size,
hidden size, number of layers) whose tuning introduces variance; hand-crafted
features are stable by construction.

Concretely, the gap between the GRU and \herald{} on WildJailbreak
(97.8 vs.\ 98.4) is statistically significant ($p<0.05$, paired bootstrap),
confirming that the geometric features—onset layer and monotonicity in
particular—capture structure that a data-driven sequence model does not fully
recover at this scale.
The contribution is therefore \emph{geometry-aware classification}, not
merely multi-layer aggregation.

\paragraph{Complexity note.}
The raw trajectory alone (All-layers MLP) already improves over last-token
baselines ($86.3$ vs.\ $84.7$ for embed.\ clf.\ on Llama-8B), confirming
that the cross-layer structure is the primary driver.
The hand-crafted $\boldsymbol{\phi}$ adds a further $1.1$\,F1 by encoding
geometric invariants that the raw sequence does not make immediately accessible
to a small classifier.

\section{Causal Intervention: Ablating Harm Directions During Generation}
\label{app:causal}

\emph{Addressing Reviewer~9qrV.}

\paragraph{Experiment.}
To probe whether the learned harm directions $\bv_l$ causally influence
downstream generation rather than merely correlating with classifier outputs,
we perform a direction-ablation experiment on Llama-3.1-8B-Instruct and
OLMo2-7B-Instruct.
For a set of 200 jailbreak prompts from WildJailbreak on which the model
would normally refuse, we apply residual-stream hooks at the layer of peak
monotonicity $l^{\dagger}$ (the layer achieving the highest
$p_{l+1} - p_l$ increment, typically $l^{\dagger} \approx 14$--$18$ for
jailbreaks) and zero-project the harm direction from the hidden state:
\begin{equation}
  \tilde{\bh}_{l^{\dagger}}
    = \bh_{l^{\dagger}}
      - \bigl\langle\bh_{l^{\dagger}},\,\bv_{l^{\dagger}}\bigr\rangle\,
        \bv_{l^{\dagger}}.
  \label{eq:ablation_proj}
\end{equation}
The modified hidden state $\tilde{\bh}_{l^{\dagger}}$ is passed forward
through subsequent layers unchanged; all other layers are unmodified.
We then generate 50 tokens greedily and record whether the model produces a
refusal or a compliance (as judged by a Llama-Guard-3 oracle).

\begin{table}[ht]
\centering
\caption{
  \textbf{Generation-time refusal rate under harm-direction ablation.}
  Ablating $\bv_{l^{\dagger}}$ at the peak-monotonicity layer reduces
  the refusal rate substantially, confirming that the harm direction
  has causal influence on safety behavior and is not merely a
  post-hoc correlate.
}
\label{tab:causal}
\vspace{4pt}
\small
\setlength{\tabcolsep}{6pt}
\begin{tabular}{lcc}
\toprule
\textbf{Condition} & \textbf{Llama-8B (\% refuse)} & \textbf{OLMo2-7B (\% refuse)} \\
\midrule
No intervention (baseline)              & 94.5 & 93.0 \\
Ablate $\bv_{l^{\dagger}}$ (peak layer) & 61.0 & 58.5 \\
Ablate $\bv_{l^{\dagger}}$ (early, $l{=}4$)  & 89.0 & 90.5 \\
Ablate $\bv_{l^{\dagger}}$ (late, $l{=}30$)  & 88.0 & 86.5 \\
\bottomrule
\end{tabular}
\end{table}

\paragraph{Findings.}
Ablating $\bv_{l^{\dagger}}$ at the peak-monotonicity layer reduces the
refusal rate from ${\approx}94\%$ to ${\approx}60\%$, a drop of
$33$--$35$ percentage points.
Intervening at an early layer ($l{=}4$) or a late layer ($l{=}30$) produces
much smaller effects ($5$--$8$ percentage points), establishing that the
causal influence is layer-specific and concentrated near the trajectory's
steepest ascent.
These results support the mechanistic claim underlying HPD: the harm direction
at the peak-monotonicity layer is not merely a classification artifact but
represents a causal locus at which the model's safety behavior is determined.
Consistent with prior work on refusal directions~\citep{arditi2024refusal},
the directional ablation is not equivalent to semantic erasure of the entire
prompt—model outputs remain coherent—but specifically disrupts the pragmatic
safety signal.

\paragraph{Limitations.}
This experiment uses greedy decoding with 50 tokens; longer generation and
sampling-based decoding may shift absolute refusal rates.
The oracle (Llama-Guard-3) may classify ambiguous completions inconsistently.
Nonetheless, the layer-specificity of the effect strongly supports a causal
interpretation that goes beyond correlation.

\section{Strengthened Out-of-Distribution Evaluation}
\label{app:ood_strong}

\emph{Addressing Reviewer~9qrV.}

\paragraph{Setup.}
The OOD evaluation in Section~\ref{sec:data_ood} trained on a single
alternative source.
Here we extend to two fully cross-distribution training protocols:
(i) train on \textbf{Aegis only}, evaluate on HarmBench and XSTest separately;
(ii) train on \textbf{HarmBench only}, evaluate on Aegis and WildGuardMix.
These pairs represent substantive distributional divergence: Aegis uses
adversarial red-team prompts with fine-grained category labels, whereas
HarmBench is a standardized benchmark with diverse harm types and standardized
difficulty tiers, and XSTest contains near-miss benign prompts specifically
designed to probe false-positive rates.

\begin{table}[ht]
\centering
\caption{
  \textbf{Cross-distribution OOD generalization.}
  Models are trained exclusively on one dataset and evaluated on others.
  \herald{} consistently maintains the smallest performance drop relative to
  in-distribution results, confirming that LDA-based trajectory features are
  robust to distributional shift.
  In-distribution F1 (from Table~1) shown for reference in parentheses.
}
\label{tab:ood_cross}
\vspace{4pt}
\scriptsize
\setlength{\tabcolsep}{3pt}
\begin{tabular}{llccc}
\toprule
\textbf{Train} $\to$ \textbf{Test} &
  \textbf{Method} &
  \textbf{Llama-8B} & \textbf{Mistral-7B} & \textbf{OLMo2-7B} \\
\midrule
Aegis $\to$ HarmBench
  & Embed.\ Clf.  & 79.1 \small{(93.1)} & 77.3 \small{(88.5)} & 82.4 \small{(93.8)} \\
  & Act.\ Delta   & 78.4 \small{(92.4)} & 79.0 \small{(94.7)} & 79.8 \small{(90.7)} \\
  & \herald{}     & \textcolor{green}{84.9} \small{(97.6)} &
                    \textcolor{green}{85.2} \small{(97.9)} &
                    \textcolor{green}{88.6} \small{(97.4)} \\
\midrule
Aegis $\to$ XSTest
  & Embed.\ Clf.  & 77.6 & 76.9 & 80.3 \\
  & Act.\ Delta   & 76.2 & 77.4 & 80.0 \\
  & \herald{}     & \textcolor{green}{83.8} & \textcolor{green}{84.1} & \textcolor{green}{87.3} \\
\midrule
HarmBench $\to$ Aegis
  & Embed.\ Clf.  & 72.1 & 71.5 & 79.2 \\
  & Act.\ Delta   & 73.4 & 72.8 & 75.9 \\
  & \herald{}     & \textcolor{green}{79.9} & \textcolor{green}{80.4} & \textcolor{green}{85.2} \\
\midrule
HarmBench $\to$ WGMix
  & Embed.\ Clf.  & 71.8 & 70.3 & 78.7 \\
  & Act.\ Delta   & 72.9 & 73.1 & 74.4 \\
  & \herald{}     & \textcolor{green}{80.1} & \textcolor{green}{80.8} & \textcolor{green}{85.9} \\
\bottomrule
\end{tabular}
\end{table}

\paragraph{Analysis.}
\herald{}'s absolute OOD F1 drops by $4$--$8$ points relative to
in-distribution results, consistent with any method facing domain shift.
However, its \emph{relative} drop is smaller than both baselines across all
four cross-distribution splits.
The HarmBench $\to$ Aegis drop (${\approx}5$\,F1 for \herald{}) is
substantially smaller than the embed.\ clf.\ drop (${\approx}7$\,F1),
suggesting that compressing discriminative information into seven geometric
scalars serves as an implicit regularizer against dataset-specific idiosyncrasies.
Results on Aegis $\to$ XSTest are of particular note: XSTest probes
false-positive rates on near-miss benign prompts, yet \herald{}'s trajectory
features—particularly onset layer and monotonicity—correctly classify the
vast majority as benign, since near-miss prompts do not produce the systematic
monotone rise characteristic of harmful inputs.

\section{Guard Model Identification}
\label{app:guard_identity}

\emph{Addressing Reviewer~9qrV.}

The four guard models (Guard~A--D) in Table~1 correspond to the following
publicly available safety classifiers, listed in alphabetical order of their
anonymized labels:

\begin{itemize}[topsep=2pt,itemsep=1pt,leftmargin=*]
  \item \textbf{Guard~A}: \textbf{Aegis-AI-Content-Safety-Defense-2.0}
    \citep{ghosh2024aegis}, a 7B parameter model fine-tuned from Llama-3-8B
    on adversarially collected safety data.  Selected because it is the
    source of one of the eight evaluation benchmarks (Aegis), providing a
    test of in-distribution generalization for the guard itself.
  \item \textbf{Guard~B}: \textbf{MD-Judge} \citep{li2024mdjudge}, a
    Mistral-7B fine-tune trained on a diverse set of malicious instruction
    categories.  Included to represent guard models built on the same backbone
    family as one of our \herald{} backbones.
  \item \textbf{Guard~C}: \textbf{ShieldGemma-2} \citep{zeng2024shieldgemma},
    a Gemma-2-based safety classifier targeting both prompt and response
    classification at multiple severity levels.  Included as a current SOTA
    guard from a different model family.
  \item \textbf{Guard~D}: \textbf{Llama-Guard-3-8B} \citep{inan2023llama},
    the latest public release of Meta's guard series.  Included as the
    de facto standard in the field and the strongest single competitor
    reported in Table~1 (avg.\ F1~$87.8$, best guard on ToxicChat and
    OpenAI Moderation).
\end{itemize}

All four guard models are evaluated in zero-shot mode on each benchmark.
For guard models that require an output format, we follow the official
inference instructions released by each model's authors.
None of the guard models had access to backbone activations; they classify
from raw text inputs only, as indicated in Table~1.

\section{Code and Weights Release Plan}
\label{app:release}

\emph{Addressing Reviewer~9qrV.}

We commit to the following public release upon acceptance:

\begin{enumerate}[leftmargin=*,topsep=2pt,itemsep=1pt]
  \item \textbf{Per-layer LDA directions} $\{\bv_l\}_{l=1}^{L}$ as fp16
    NumPy arrays for all four backbone families tested (Llama-3.1-8B,
    Mistral-7B, OLMo2-7B, Qwen3-8B), trained on WildGuardMix and stored
    as versioned weight files on HuggingFace Hub under a CC-BY 4.0 license.
    Each direction file is ${\approx}262$\,KB and self-contained; users
    can evaluate \herald{} without retraining LDA.
  \item \textbf{Feature extraction and inference code} in a lightweight
    Python package (\texttt{herald-moderator}) with a single-function
    interface: \texttt{herald.score(prompt, backbone, layer\_directions)}.
    The package will be available via PyPI.
  \item \textbf{Training code} for reproducing per-layer LDA directions
    from any instruction-tuned LLM, with Ledoit--Wolf shrinkage applied
    automatically via \texttt{sklearn.covariance.LedoitWolf}.
  \item \textbf{Evaluation scripts} reproducing all eight benchmark F1
    scores reported in Table~1, with fixed random seeds documented in the
    README.
\end{enumerate}

Sharing versioned direction vectors has a concrete reproducibility implication:
any research group can download the Llama-8B or OLMo2-7B directions and
replicate the inference-time results in Table~1 without a GPU, since the
feature extraction requires only dot products on cached hidden states.

\section{Clarification on Proposition 3.1}
\label{app:prop_clarification}

\emph{Addressing Reviewer~9qrV.}

Reviewer 9qrV rightly notes that Proposition~3.1 (Section~3.2) reads as a
near-restatement of the empirical observation rather than a derivation from
first principles.
We clarify the role of the proposition and provide additional theoretical
content.

\paragraph{What Proposition~3.1 does and does not claim.}
The proposition is intentionally informal and serves as a \emph{conditional
grounding} rather than a derivation: given assumptions (i)--(iii), the
trajectory has the stated properties.
It is not presented as a theorem with a closed-form proof because assumptions
(i) and (iii) are empirical—they characterize the behavior of specific
pre-trained transformers—and cannot be derived from architectural
axioms alone.
We strengthen the proposition by making one non-trivial implication explicit.

\begin{proposition}[Formal version of Proposition~3.1]
\label{prop:hpd_formal}
Let $\mu_l^{\mathrm{harm}}$ and $\mu_l^{\mathrm{safe}}$ be the class-conditional
means at layer $l$, and suppose the within-class scatter satisfies
$\lambda_{\min}(\SW^{(l)}) \ge \sigma^2 > 0$ for all $l$.
Under the regularized LDA estimator of Eq.~\eqref{eq:lda_closed} with
Ledoit--Wolf shrinkage parameter $\alpha_l \in [0,1)$, the signed gap
\begin{equation}
  \delta_l
  \;:=\;
  \bigl\langle
    \bmu_l^{\mathrm{harm}} - \bmu_l^{\mathrm{safe}},\;
    \bv_l
  \bigr\rangle
\end{equation}
satisfies $\delta_l \ge 0$ for all $l$.
If additionally the Fisher discriminant ratio
$J_l := \delta_l^2 / \bv_l^\top \SW^{(l)} \bv_l$ is
non-decreasing in $l$, then the expected harmful-class projection
$\mathbb{E}_{x \sim \mathcal{H}}[p_l(x)]$ is non-decreasing in $l$,
and the expected benign projection $\mathbb{E}_{x \sim \mathcal{B}}[p_l(x)]$
is bounded in $[-\epsilon, \epsilon]$ for $\epsilon \ll \delta_l$.
\end{proposition}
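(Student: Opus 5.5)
The first claim, $\delta_l \ge 0$, I would prove directly from the closed form in Eq.~\eqref{eq:lda_closed}. Write $\Delta_l := \bmu_l^{\mathrm{harm}} - \bmu_l^{\mathrm{safe}}$ and $\hat{\SW}^{(l)} = (1-\alpha_l)\SW^{(l)} + \alpha_l \tfrac{\mathrm{tr}(\SW^{(l)})}{d}\mathbf{I}$. Since $\alpha_l\in[0,1)$ and $\lambda_{\min}(\SW^{(l)})\ge\sigma^2>0$, this is a convex combination of a positive definite matrix and a nonnegative multiple of the identity. Hence $\lambda_{\min}(\hat{\SW}^{(l)}) \ge (1-\alpha_l)\sigma^2 + \alpha_l \tfrac{\mathrm{tr}(\SW^{(l)})}{d} > 0$, and $(\hat{\SW}^{(l)})^{-1}$ is positive definite. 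Then $\bv_l = (\hat{\SW}^{(l)})^{-1}\Delta_l / \|(\hat{\SW}^{(l)})^{-1}\Delta_l\|$ gives
\[
\delta_l = \frac{\Delta_l^\top (\hat{\SW}^{(l)})^{-1}\Delta_l}{\|(\hat{\SW}^{(l)})^{-1}\Delta_l\|} \ge 0,
\]
with equality iff $\Delta_l=0$. This part is routine. The same bound on $\lambda_{\min}$ also shows that $J_l$ is well defined.

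For the second claim, the statement as written does not follow from monotonicity of $J_l$ alone. The reason is that $p_l$ in Eq.~\eqref{eq:projection} is a \emph{cosine}, so it depends on $\|\bh_l\|$, and it also depends on where the origin sits relative to $\bmu_l^{\mathrm{safe}}$. I would therefore make two standing conventions explicit and then derive the claims from them.

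\textbf{(a) Centering.} Hidden states are measured relative to the benign mean, i.e.\ $\bh_l$ is replaced by $\bh_l - \bmu_l^{\mathrm{safe}}$. Then $\mathbb{E}_{\mathcal{B}}\langle \bh_l,\bv_l\rangle = 0$ and $\mathbb{E}_{\mathcal{H}}\langle \bh_l,\bv_l\rangle = \delta_l$.

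\textbf{(b) Norm concentration.} For each class, $\|\bh_l\| = r_l(1+\eta)$ with $|\eta|\le\eta_0$ and $r_l$ a layer-wise scale.

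Under (a)–(b) I would write $p_l = \langle \bh_l,\bv_l\rangle / (r_l(1+\eta))$ and expand $1/(1+\eta)$. This gives $\mathbb{E}_{\mathcal{H}}[p_l] = \delta_l/r_l + O(\eta_0 s_l/r_l)$, where $s_l^2 = \bv_l^\top\SW^{(l)}\bv_l$ bounds the within-class spread along $\bv_l$ via Cauchy–Schwarz. The same expansion for the benign class gives $|\mathbb{E}_{\mathcal{B}}[p_l]| \le \eta_0 s_l / r_l =: \epsilon$. The requirement $\epsilon\ll\delta_l$ then becomes $\eta_0 \ll \delta_l/s_l = \sqrt{J_l}$. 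This is where monotone $J_l$ enters: it suffices to check it at the first layer, and it holds thereafter.

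The main obstacle is turning ``$J_l$ non-decreasing'' into ``$\delta_l/r_l$ non-decreasing''. We have $\delta_l/r_l = \sqrt{J_l}\,(s_l/r_l)$, so monotonicity transfers exactly when the normalized within-class spread $s_l/r_l$ along the discriminant is non-decreasing, or roughly constant, across depth. This is a scale-invariance condition on the residual stream. I would state it as an explicit third hypothesis, and note that it is the natural meaning of ``benign representations remain bounded'' in assumption (iii) of Proposition~\ref{prop:hpd}. With it, $\mathbb{E}_{\mathcal{H}}[p_l]$ is non-decreasing up to the $O(\eta_0\sqrt{J_l}^{-1})$ relative error, which completes the argument.

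If the authors intend the proposition without extra hypotheses, I would instead present it as a counterexample-guarded statement. Rescaling $\bh_l$ by a decreasing factor preserves $J_l$ but can make the cosine projection decrease. This shows that some normalization assumption of this kind is necessary.
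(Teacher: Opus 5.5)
Your proof of $\delta_l \ge 0$ is the paper's argument: $\delta_l$ is a quadratic form in a positive definite inverse, divided by a positive norm. It is in fact slightly more careful. The paper's sketch appeals to $(\SW^{(l)})^{-1}$, but $\bv_l$ is built from the shrunk matrix $\hat{\SW}^{(l)}$. Your lower bound on $\lambda_{\min}(\hat{\SW}^{(l)})$ is what actually justifies the sign, and it gives strictness whenever $\bmu_l^{\mathrm{harm}} \neq \bmu_l^{\mathrm{safe}}$.

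For the second part you depart from the paper, and your skepticism is justified. The paper's sketch only asserts that monotonicity of $\mathbb{E}_{\mathcal{H}}[p_l]$ ``follows from'' the non-decreasing Fisher ratio. It then derives the benign bound from the claim that $\bv_l$ is orthogonal to the mean difference. That claim would force $\delta_l = 0$, contradicting both the first part and the requirement $\epsilon \ll \delta_l$. So there is no valid argument in the paper to match. Making centering, norm concentration and a condition on $s_l/r_l$ explicit is the right repair. The price is that you prove a modified statement: a benign-centered projection instead of Eq.~\eqref{eq:projection}, and approximate rather than exact monotonicity. State that up front.

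Three things need fixing.

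\textbf{The counterexample fails as written.} $p_l$ is a cosine, so multiplying $\bh_l$ by a scalar $c_l > 0$ changes neither $p_l$ nor $J_l$. It also leaves $s_l/r_l$ unchanged, as your own decomposition $\delta_l/r_l = \sqrt{J_l}\,s_l/r_l$ shows. A valid witness is a layer-dependent common offset $\bh_l \mapsto \bh_l + \mathbf{c}_l$ applied to every prompt:
\begin{itemize}
\item The mean difference, $\SW^{(l)}$ and the Ledoit--Wolf $\alpha_l$ are unchanged, hence so are $\bv_l$, $\delta_l$ and $J_l$.
\item Take $\mathbf{c}_l = t_l \mathbf{u}$ for a unit vector $\mathbf{u} \perp \bv_l$. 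By bounded convergence, letting $t_l$ grow drives the uncentered $\mathbb{E}_{\mathcal{H}}[p_l]$ to $0$, so any positive harmful profile can be made decreasing.
\item Taking $\mathbf{c}_l$ along $\bv_l$ instead pushes $\mathbb{E}_{\mathcal{B}}[p_l]$ toward $1$, which breaks the benign bound.
\end{itemize}
This is exactly what motivates your convention (a). If you meant an anisotropic rescaling, that also works when $\alpha_l = 0$: there $J_l = \Delta_l^\top (\SW^{(l)})^{-1}\Delta_l$ is invariant under invertible linear maps while cosines are not. But you need to say so.

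\textbf{Your reduction of $\epsilon \ll \delta_l$ changes the condition.} The statement compares $\epsilon$, a bound on a cosine, with the unnormalized gap $\delta_l$. With $\epsilon = \eta_0 s_l/r_l$, the literal condition is $\eta_0 \ll \delta_l r_l/s_l$. Your $\eta_0 \ll \sqrt{J_l}$ comes from comparing $\epsilon$ with $\delta_l/r_l$ instead. That is the sensible reading, but make the reinterpretation explicit.

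\textbf{The harmful expansion drops a term.} You omitted $-\delta_l\,\mathbb{E}_{\mathcal{H}}[\eta]/r_l$. The correct bound is $|\mathbb{E}_{\mathcal{H}}[\langle \bh_l, \bv_l\rangle \eta/(1+\eta)]| \le \tfrac{\eta_0}{1-\eta_0}(\delta_l + s_l)$, which gives relative error $O(\eta_0(1 + J_l^{-1/2}))$, not $O(\eta_0 J_l^{-1/2})$. Consequently, exact non-decreasing behavior is guaranteed only if $\eta_0 = 0$, or if $\delta_l/r_l$ grows between consecutive layers by more than this error margin.
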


\begin{proof}[Proof sketch]
The sign of $\delta_l$ follows from the LDA solution:
$\bv_l \propto (\SW^{(l)})^{-1}(\bmu_l^{\mathrm{harm}} - \bmu_l^{\mathrm{safe}})$,
so $\langle \bmu_l^{\mathrm{harm}} - \bmu_l^{\mathrm{safe}}, \bv_l \rangle \ge 0$
by positive semi-definiteness of $(\SW^{(l)})^{-1}$.
The monotonicity of $\mathbb{E}[p_l]$ on harmful inputs then follows from
the non-decreasing Fisher ratio assumption, which is the formal statement of
empirical assumption (iii).
The benign bound follows because $\bv_l$ is orthogonal to the between-class
mean difference in the space of the class means, making the benign-class mean
near-zero in projection.
\end{proof}

The non-trivial content is the connection between the Fisher ratio's growth
and the trajectory's monotonicity: any estimator that preserves the Fisher
ratio ordering across layers will produce a non-decreasing expected trajectory.
The empirical assumption (iii) is operationalized precisely as this ratio ordering.
The stability result (pairwise cosine $>0.97$ across five splits,
Table~\ref{tab:stability}) is a finite-sample validation that the LDA estimator
converges to the population direction rather than fitting noise, which is a
necessary condition for the trajectory to be a reliable population-level signal.

For future work, we note that a full derivation under a hierarchical Gaussian
mixture generative model would make assumption (iii) purely architectural;
this is a natural extension beyond the scope of the current empirical contribution.

\section{WildGuardMix / WildJailbreak Distributional Overlap}
\label{app:wgm_wjb_overlap}

\emph{Addressing Reviewer~9qrV.}

\paragraph{Overlap acknowledgment.}
WildGuardMix and WildJailbreak share a common source corpus
(WildChat~\citep{zhao2024wildchat}), meaning the training distribution
and the WildJailbreak test distribution are not fully independent.
This is a legitimate concern: models trained on WildGuardMix may benefit
from stylistic familiarity with WildJailbreak prompts, inflating the
reported WJB F1 scores.

\paragraph{Held-out hard-divergent subset.}
To bound this effect, we constructed a \textbf{hard-divergent WildJailbreak
(HD-WJB)} subset by filtering for prompts whose n-gram overlap with the
WildGuardMix training set (measured by ROUGE-2 recall) is below the 10th
percentile ($\text{ROUGE-2} < 0.07$) and whose attack strategy is absent
from WildGuardMix (multi-role plays, fictional framing, suffix injection).
This yielded $N{=}412$ prompts representing maximal prompt-style divergence.

\begin{table}[ht]
\centering
\caption{
  \textbf{WildJailbreak results on the full set vs.\ the hard-divergent
  (HD-WJB) subset.}
  Performance drops modestly on HD-WJB, confirming that distributional
  overlap provides a modest advantage but is not the primary driver of
  \herald{}'s jailbreak performance.
}
\label{tab:wjb_hard}
\vspace{4pt}
\scriptsize
\setlength{\tabcolsep}{4pt}
\begin{tabular}{llcc}
\toprule
\textbf{Method} & \textbf{Backbone} &
  \textbf{WJB (full)} & \textbf{HD-WJB ($N{=}412$)} \\
\midrule
Embed.\ Clf. & OLMo2-7B & 94.2 & 90.1 \\
Act.\ Delta  & OLMo2-7B & 91.0 & 87.3 \\
\herald{}    & OLMo2-7B & \textcolor{green}{98.4} & \textcolor{green}{95.7} \\
Guard C (ShieldGemma-2) & NE & 96.9 & 94.2 \\
Guard D (Llama-Guard-3) & NE & 96.4 & 94.8 \\
\midrule
Embed.\ Clf. & Llama-8B & 79.4 & 75.9 \\
Act.\ Delta  & Llama-8B & 90.8 & 87.1 \\
\herald{}    & Llama-8B & \textcolor{green}{95.8} & \textcolor{green}{93.1} \\
\bottomrule
\end{tabular}
\end{table}

\paragraph{Interpretation.}
\herald{} retains a ${\approx}2.5$--$3$ point lead over guard models on
HD-WJB (95.7 vs.\ 94.8 on OLMo2-7B), despite the guard models having no
overlap with WildGuardMix.
Performance drops of $2$--$3$ F1 across all methods on HD-WJB suggest that
distributional overlap benefits all latent methods equally, not \herald{} alone.
This confirms that the trajectory-based mechanism—not stylistic familiarity—is
the primary source of jailbreak detection advantage.
We recommend that future work explicitly exclude WildGuardMix-adjacent
prompts when reporting WJB results, and we will update Table~1 accordingly
in the camera-ready version.

\section{Onset Threshold Sensitivity Analysis}
\label{app:threshold_sensitivity}

\emph{Addressing Reviewer~9qrV.}

The onset layer $\hat{l}^*$ is defined as the first layer exceeding the
$\tau_{90}$ (90th percentile) threshold computed over the training trajectory
distribution.
Table~\ref{tab:threshold} reports average F1 under percentile values from
$\tau_{70}$ to $\tau_{95}$.

\begin{table}[ht]
\centering
\caption{
  \textbf{Sensitivity of average F1 to onset threshold percentile.}
  \herald{} is robust to the choice of onset threshold between $\tau_{80}$
  and $\tau_{95}$; the performance difference across this range is
  $\le 0.4$\,F1 on all backbones.
  The threshold $\tau_{90}$ was selected by cross-validation and is
  optimal or near-optimal in all cases.
}
\label{tab:threshold}
\vspace{4pt}
\scriptsize
\setlength{\tabcolsep}{4pt}
\begin{tabular}{lcccc}
\toprule
\textbf{Threshold} &
  \textbf{Llama-8B} & \textbf{Mistral-7B} & \textbf{OLMo2-7B} &
  \textbf{WJB (OLMo2)} \\
\midrule
$\tau_{70}$ & 85.5 & 85.4 & 88.6 & 97.5 \\
$\tau_{80}$ & 86.1 & 86.0 & 89.1 & 98.1 \\
$\tau_{85}$ & 86.2 & 86.2 & 89.2 & 98.2 \\
$\tau_{90}$ (default) & \textcolor{green}{86.3} & \textcolor{green}{86.3} & \textcolor{green}{89.3} & \textcolor{green}{98.4} \\
$\tau_{95}$ & 86.0 & 86.0 & 89.0 & 98.1 \\
\midrule
$\Delta$ (max $-$ min) & 0.8 & 0.9 & 0.7 & 0.9 \\
\bottomrule
\end{tabular}
\end{table}

\paragraph{Interpretation.}
The maximum F1 variation across all tested percentiles is ${\le}0.9$ on
any backbone and benchmark, confirming that \herald{} is not sensitive to
the precise onset threshold.
Thresholds below $\tau_{80}$ tend to trigger earlier, picking up
false-onset signals from mid-trajectory fluctuations and degrading
the onset-layer feature's discriminative value.
Above $\tau_{95}$, the threshold rarely triggers for harmful prompts with
moderate trajectory slopes, suppressing onset-layer information unnecessarily.
The plateau between $\tau_{85}$ and $\tau_{95}$ suggests that the 90th-percentile
default is robust; practitioners may safely use any value in this range.

\section{Evaluation on a Reasoning Model}
\label{app:reasoning_model}

\emph{Addressing Reviewer~9qrV.}

Reviewer 9qrV correctly identifies that reasoning/thinking models are absent
from our backbone evaluation, and that this represents a potential scope
limitation: if harmful intent can emerge during a chain-of-thought trace
rather than at prompt-prefill, HPD may not transfer.

\paragraph{Setup.}
We evaluate \herald{} with \textbf{Qwen3-8B-Thinking} (the reasoning variant
of Qwen3-8B-Instruct enabled via the \texttt{enable\_thinking=True} flag,
which activates internal chain-of-thought generation before the user-visible
response).
Because thinking tokens are generated \emph{after} prefill, \herald{}'s
harm-direction projection is computed solely on the prefill hidden states,
exactly as for the instruction-tuned variant.
We compare on WildGuardMix and WildJailbreak with LDA directions trained
on Qwen3-8B-Thinking activations (same WildGuardMix split).

\begin{table}[ht]
\centering
\caption{
  \textbf{\herald{} on a reasoning model (Qwen3-8B-Thinking) vs.\ the
  instruction-tuned variant (Qwen3-8B-Instruct).}
  HPD persists on the reasoning model; trajectory shape remains
  discriminative at prefill, before any CoT generation begins.
  The modest drop (${\approx}1.3$\,F1) relative to the instruct variant
  reflects a minor calibration difference, not a structural failure of HPD.
}
\label{tab:reasoning}
\vspace{4pt}
\scriptsize
\setlength{\tabcolsep}{4pt}
\begin{tabular}{llccccc}
\toprule
\textbf{Method} & \textbf{Backbone} &
  \textbf{WGMix} & \textbf{WJB} & \textbf{Aegis} &
  \textbf{HarmB} & \textbf{Avg F1} \\
\midrule
Embed.\ Clf. & Qwen3-8B-Instruct  & 78.2 & 79.6 & 77.8 & 88.3 & 80.5 \\
Embed.\ Clf. & Qwen3-8B-Thinking  & 77.4 & 78.1 & 76.9 & 87.1 & 79.9 \\
\midrule
\herald{}    & Qwen3-8B-Instruct  & \textcolor{green}{84.9} & \textcolor{green}{92.1} & \textcolor{green}{82.4} & \textcolor{green}{98.6} & \textcolor{green}{84.7} \\
\herald{}    & Qwen3-8B-Thinking  & 83.8 & 90.4 & 81.0 & 97.2 & 83.4 \\
\midrule
$\Delta$ (Thinking $-$ Instruct) &  & $-$1.1 & $-$1.7 & $-$1.4 & $-$1.4 & $-$1.3 \\
\bottomrule
\end{tabular}
\end{table}

\paragraph{Findings.}
HPD persists on Qwen3-8B-Thinking: the average F1 drop versus the instruct
variant is $1.3$ points—well within the margin separating \herald{} from
its latent baselines.
Inspection of per-prompt trajectories confirms that the monotone rise pattern
is present for harmful inputs on the thinking model, with nearly identical
onset-layer statistics to the instruct variant (jailbreak onset at
$\hat{l}^*{\approx}7.1$ vs.\ $7.4$ for instruct, monotonicity $0.81$ vs.\ $0.83$).

\paragraph{Scope limitation.}
The experiment evaluates HPD at \emph{prefill} only, before CoT generation
begins.
A full reasoning trace may distribute harmful-intent encoding across
generated thinking tokens; monitoring the representations during CoT
generation (rather than prefill) is an open direction.
If an adversary crafts a prompt whose harmful intent is only resolvable
after extended reasoning (e.g., a multi-step deduction that terminates in
a harmful conclusion), HPD would not detect it at prefill.
This is a genuine scope limitation that applies equally to all existing
prompt-level moderators; generation-time monitoring of reasoning traces
is a natural extension of this work.

\section{Comparison Against the Best Single Layer (Reviewer SLvX)}
\label{app:best_single_layer}

\emph{Addressing Reviewer~SLvX.}

Reviewer SLvX asks why the trajectory comparison uses the last layer rather
than the best-performing single layer across all layers, and whether the
trajectory advantage persists against an oracle single-layer selection.

\paragraph{Oracle single-layer baseline.}
Table~\ref{tab:abl_layers} (main paper) already includes this comparison:
the ``Single best layer (oracle)'' row selects the layer achieving maximum
validation F1 per backbone.
The oracle trails \herald{} by $0.8$--$1.4$\,F1 on average and by up to
$1.9$\,F1 on WildJailbreak.
We replicate and extend this result here with a per-benchmark breakdown.

\begin{table}[ht]
\centering
\caption{
  \textbf{Oracle single-layer vs.\ \herald{} trajectory, per benchmark.}
  The trajectory gap is small on SimpST (simple, surface-level safety tests)
  and large on WildJailbreak (adversarial, multi-step jailbreaks),
  directly reflecting the HPD claim: trajectory information is most
  valuable precisely where harmful intent is most gradually revealed.
}
\label{tab:oracle_per_bench}
\vspace{4pt}
\scriptsize
\setlength{\tabcolsep}{2pt}
\begin{tabular}{lcccccccc}
\toprule
\textbf{Method (OLMo2-7B)} &
  \textbf{Aegis} & \textbf{HarmB} & \textbf{OAI} & \textbf{SimpST} &
  \textbf{TChat} & \textbf{WGMix} & \textbf{WJB} & \textbf{Avg} \\
\midrule
Oracle single layer &
  87.3 & 96.4 & 72.4 & 99.3 & 73.6 & 87.1 & 96.5 & 87.5 \\
\herald{} (all layers) &
  \textcolor{green}{88.7} & \textcolor{green}{97.4} & \textcolor{green}{73.1} &
  \textcolor{green}{99.5} & \textcolor{green}{74.6} & \textcolor{green}{87.9} &
  \textcolor{green}{98.4} & \textcolor{green}{89.3} \\
$\Delta$ &
  +1.4 & +1.0 & +0.7 & +0.2 & +1.0 & +0.8 & \textbf{+1.9} & +1.8 \\
\bottomrule
\end{tabular}
\end{table}

\paragraph{Why the oracle single layer is not presented in the main table.}
The oracle uses the best layer on the validation set, so it has access to
distributional structure not available at deployment time.
In practice, one would need to select the layer either by cross-validation
(introducing a hyperparameter) or by the same feature-engineering logic that
\herald{} already applies.
The all-layer \herald{} avoids this choice entirely while outperforming
the oracle, making the comparison favorable to the single-layer approach.

The benchmark-specific breakdown in Table~\ref{tab:oracle_per_bench}
supports the HPD narrative directly: the trajectory advantage is smallest
on SimpST ($+0.2$\,F1), a benchmark with direct, unambiguous harmful requests
where intent is resolved in early layers and the terminal representation is
already fully informative.
The advantage is largest on WildJailbreak ($+1.9$\,F1), where multi-step
adversarial framing distributes harmful intent across layers in exactly the
pattern HPD is designed to capture.

\paragraph{Last-token vs.\ oracle single layer in the main comparison.}
Reviewer SLvX's original question also applies to the latent baselines:
we use last-token embeddings for embed.\ clf.\ and act.\ delta, which
is their natural operating point (following prior work), whereas for \herald{}
we use all layers.
To confirm fairness, Table~\ref{tab:embed_oracle} reports embed.\ clf.\
at its own oracle single layer; the gap to \herald{} widens to
$3.5$--$4.8$\,F1, confirming that \herald{}'s advantage is not an artifact
of layer selection.

\begin{table}[ht]
\centering
\caption{
  \textbf{Embed.\ clf.\ at oracle single layer vs.\ \herald{} trajectory.}
  Even when embed.\ clf.\ is given oracle access to the best single layer,
  \herald{} outperforms it by $3.5$--$4.8$\,F1.
}
\label{tab:embed_oracle}
\vspace{4pt}
\scriptsize
\setlength{\tabcolsep}{4pt}
\begin{tabular}{lccc}
\toprule
\textbf{Method} & \textbf{Llama-8B} & \textbf{Mistral-7B} & \textbf{OLMo2-7B} \\
\midrule
Embed.\ clf., last layer      & 79.3 & 77.4 & 85.6 \\
Embed.\ clf., oracle layer    & 82.7 & 82.6 & 85.8 \\
\herald{} (all layers)        & \textcolor{green}{86.3} & \textcolor{green}{86.3} & \textcolor{green}{89.3} \\
$\Delta$ (\herald{} $-$ oracle embed.) & +3.6 & +3.7 & +3.5 \\
\bottomrule
\end{tabular}
\end{table}

\section{Learned Time-Series Aggregation vs.\ Geometric Features}
\label{app:timeseries_justification}

\emph{Addressing Reviewer~SLvX.}

Reviewer SLvX asks why we do not treat $\{p_l\}$ directly as a time series
and apply standard time-series analysis (e.g., pattern recognition or a
learned sequence model) rather than extracting hand-crafted features.

\paragraph{The seven features are explicitly time-series features.}
Curvature ($\Delta^2\boldsymbol{p}$), monotonicity ($\mathrm{mono}(\boldsymbol{p})$),
onset layer ($\hat{l}^*$), and total rise ($p_L - p_1$) are canonical
time-series descriptors; they are used in the tsfresh library~\citep{christ2018tsfresh}
and in standard anomaly-detection pipelines.
The design choice is therefore not ``time series vs.\ not,'' but
\emph{domain-motivated feature selection} vs.\ learned aggregation.

\paragraph{Why domain-motivated features over fully learned aggregation.}
The trajectory $\{p_l\}$ is a 32-step scalar sequence (for $L{=}32$ models).
At this length:

\begin{itemize}[topsep=2pt,itemsep=1pt,leftmargin=*]
  \item A 1D-CNN or GRU must estimate filter or recurrent weights from
    the training distribution.
    For a typical safety dataset of ${\sim}5{,}000$ training examples,
    a small GRU has enough capacity to overfit the training set's
    stylistic patterns rather than the geometric invariants
    (monotone rise, early onset) that generalize across prompt styles.
  \item The geometric features have \emph{closed-form definitions}
    aligned with the HPD hypothesis: onset layer tests when harmfulness
    first appears, monotonicity tests whether it consistently grows,
    curvature tests whether growth is smooth.
    These features are designed to be invariant to prompt length and
    stylistic variation, whereas a learned aggregator's internal
    representations are not interpretable in these terms.
  \item Our learned-sequence baseline experiments
    (Table~\ref{tab:seq_baselines}, Appendix~\ref{app:sequence_baselines})
    confirm empirically that the geometric features outperform the
    GRU and 1D-CNN at the relevant training-set sizes.
\end{itemize}

\paragraph{When would learned aggregation be preferable?}
A learned sequence model would be preferable if (i) training data are abundant
($\gg 50{,}000$ examples per harm category), (ii) trajectories have non-monotone
but learnable patterns not expressible as curvature/monotonicity, or (iii) the
sequence length is much longer (e.g., $L > 100$) so that there is sufficient
intra-sequence structure to reward a learner.
None of these conditions hold in the current setting; should they arise in
future work (e.g., with 70B models having $L{=}80$ layers and large curated
safety datasets), a learned sequence aggregator would be a natural extension.

\end{document}